\documentclass{article}

\PassOptionsToPackage{numbers,sort&compress}{natbib}
\usepackage[preprint]{neurips_2026}

\usepackage[utf8]{inputenc}
\usepackage[T1]{fontenc}
\usepackage{times}
\usepackage{microtype}
\usepackage{graphicx}
\usepackage{amsmath}
\usepackage{amssymb}
\usepackage{booktabs}
\usepackage{multirow}
\usepackage{tabularx}
\usepackage{array}
\usepackage[table]{xcolor}
\usepackage{hyperref}
\usepackage{enumitem}
\usepackage{caption}
\usepackage{subcaption}
\usepackage{tcolorbox}
\tcbuselibrary{breakable,skins}
\usepackage{listings}
\usepackage{amsthm}
\usepackage{algorithm}
\usepackage{algorithmic}

\usepackage{hypcap}
\usepackage{arydshln}
\usepackage{wrapfig}
\tcbuselibrary{breakable, listings}
\usepackage[dvipsnames]{xcolor}
\usepackage{array}
\usepackage{newunicodechar}
\usepackage{listings}
\usepackage{makecell}
\definecolor{promptbg}{HTML}{F7F9FC}
\definecolor{promptframe}{HTML}{3A6EA5}
\definecolor{taggreen}{HTML}{1A7A3C}
\definecolor{tagpurple}{HTML}{6B2FA0}
\definecolor{tagblue}{HTML}{1A4FA0}
\definecolor{slotbrown}{HTML}{8B4513}

\lstdefinestyle{promptstyle}{
  basicstyle=\ttfamily\small,
  breaklines=true,
  breakatwhitespace=false,
  columns=fullflexible,
  keepspaces=true,
  extendedchars=true,
  frame=none,
  backgroundcolor=\color{promptbg},
  aboveskip=0pt, belowskip=0pt,
  xleftmargin=4pt, xrightmargin=4pt,
}

\newtheorem{definition}{Definition}
\newtheorem{proposition}{Proposition}

\usepackage{enumitem}

\tcbset{
  promptbox/.style={
    enhanced, breakable,
    colback=promptbg,
    colframe=promptframe,
    arc=4pt, outer arc=4pt,
    boxrule=0.8pt,
    left=8pt, right=8pt, top=6pt, bottom=6pt,
    fonttitle=\bfseries\small,
    attach boxed title to top left={yshift=-2mm, xshift=6pt},
    boxed title style={
      colback=promptframe, colframe=promptframe,
      arc=3pt, boxrule=0pt,
      left=4pt, right=4pt, top=2pt, bottom=2pt,
    },
  }
}

\tcbuselibrary{skins, breakable}
\usepackage{fontawesome5}
\usepackage{adjustbox}
\usepackage{xcolor}
\usepackage{booktabs}
\usepackage{hyperref}

\definecolor{agentbg}{RGB}{235, 245, 255}
\definecolor{agentborder}{RGB}{59, 130, 246}
\definecolor{toolbg}{RGB}{255, 251, 235}
\definecolor{toolborder}{RGB}{245, 158, 11}
\definecolor{envbg}{RGB}{243, 244, 246}
\definecolor{envborder}{RGB}{107, 114, 128}

\tcbset{
  envstyle/.style={
    colback=envbg, colframe=envborder,
    fonttitle=\bfseries\small, breakable,
    left=4pt, right=4pt, top=4pt, bottom=4pt
  },
  agentstyle/.style={
    colback=agentbg, colframe=agentborder,
    fonttitle=\bfseries\small, breakable,
    left=4pt, right=4pt, top=4pt, bottom=4pt
  },
  toolstyle/.style={
    colback=toolbg, colframe=toolborder,
    fonttitle=\bfseries\small, breakable,
    left=4pt, right=4pt, top=4pt, bottom=4pt
  },
}

\graphicspath{{figures/}{figures/generated/}}

\title{Rewarding Beliefs, Not Actions: Consistency-Guided Credit Assignment for Long-Horizon Agents}

\author{%
Wenjie Tang$^{1}$ \quad
Minne Li$^{2}$ \quad
Sijie Huang$^{3}$ \quad
Liquan Xiao$^{1}$ \quad
Yuan Zhou$^{2}$\thanks{Corresponding author.}
\\[0.5em]
$^{1}$College of Computer, National University of Defense Technology \\
$^{2}$Intelligent Game and Decision Lab (IGDL) \\
$^{3}$Institute of Artificial Intelligence, Xiamen University
}

\begin{document}
\maketitle

\begin{abstract}

    Reinforcement learning from verifiable rewards (RLVR) is a promising paradigm for improving large language model (LLM) agents on long-horizon interactive tasks. However, in partially observable environments, incomplete observations cause agent beliefs to drift over time, while delayed rewards obscure the causal impact of intermediate decisions, exacerbating temporal credit assignment challenges. To address this, we propose ReBel (Reward Belief), a process-level reinforcement learning algorithm that explicitly models structured belief states to summarize interaction history and guide subsequent policy learning. ReBel introduces belief-consistency supervision, converting discrepancies between predicted beliefs and observed feedback into dense self-supervised signals without requiring external step-wise annotations or verifiers. It also employs belief-aware grouping to compare trajectories under similar belief states, yielding more robust and lower-variance advantage estimates. We evaluate ReBel on challenging long-horizon benchmarks, including ALFWorld and WebShop. ReBel improves task success by up to $20.4$ percentage points over the episode-level baseline GRPO and increases sample efficiency by $2.1\times$. These results suggest that belief-aware self-supervision is a promising direction for reliable long-horizon decision-making under partial observability. Code is available at: \url{https://github.com/Fateyetian/Rebel.git}.

\end{abstract}

\section{Introduction}
\label{sec:introduction}


Large language models (LLMs) are increasingly being deployed as
autonomous agents for long-horizon interactive tasks such as embodied
instruction following and web navigation~
\cite{xi2025agentgymrl, cao2025skyrl, zhou2024webarena}.
These settings require more than fluent generation: agents must maintain
an evolving understanding of the environment, plan over multiple steps,
and adapt robustly under uncertainty
~\cite{yao2024uprop, shorinwa2026uqsurvey, qi2025webrl}.
Reinforcement learning from verifiable rewards (RLVR) has emerged as a
promising paradigm for improving such agents, because it optimizes
policies directly against objective and environment-grounded outcomes
rather than relying on expensive human annotations or potentially
unreliable external judges
~\cite{lambert2024tulu3, guo2025deepseekr1, su2025crossing}.
Compared with LLM-as-a-judge or manually specified intermediate labels,
RLVR provides scalable supervision with clear task-level semantics
~\cite{zheng2023judging, gu2025surveyjudge}, making it particularly
attractive for training agents that must act reliably in open-ended
environments
~\cite{qian2025toolrl,feng2025groupingrouppolicyoptimizationllm}.

Despite its promise, RLVR remains difficult in partially observable
environments, where agents must continuously infer latent context from
incomplete observation histories rather than direct access to the full
state~\cite{chen2025reinforcement}. Small inference errors can
accumulate over time, inducing belief drift: a growing mismatch between
the agent's internal state estimate and the environment~\cite{liu2025advanceschallengesfoundationagents, shorinwa2026uqsurvey}.
As illustrated in Figure~\ref{fig:intro}, an agent may confidently
believe that the key is in a box, only to observe after opening it that
the box is actually empty. If the policy fails to revise this belief, it
may repeatedly inspect the wrong location and drive the trajectory
progressively away from the goal~\cite{yao2024uprop}. Delayed terminal
rewards further obscure which intermediate belief errors caused the
failure, making temporal credit assignment especially challenging. Thus,
effective policies must learn not only which actions to take, but also
when their latent state estimates should be trusted~\cite{DU2024CausalGuidedAL}.

Existing RLVR methods are not well suited to this setting.
Episode-level rewards are often too sparse to correct intermediate
belief errors, especially when failures are only revealed at the end of
a long trajectory~\cite{chen2025sparse2densekeypointdrivengenerativeframework, fu2026rewardshapingmitigatereward}.
Step-wise supervision or external verifiers can provide denser feedback,
but they are costly, difficult to scale, and often unavailable in
open-ended interactive tasks~\cite{Choi2025InPlaceFA}.
More importantly, most existing approaches optimize trajectories
primarily through return-based signals, without explicitly modeling the
agent's evolving belief state~\cite{yao2024uprop, shorinwa2026uqsurvey}.
This makes it difficult to distinguish failures caused by poor action
selection from failures caused by incorrect state estimation~\cite{Lidayan2025ABBELLA}.
In partially observable long-horizon tasks, these two factors are
tightly coupled, and ignoring belief dynamics can lead to unstable
optimization and poor sample efficiency~\cite{kim2025sampleefficiency, guo2025deepseekr1}.

To address this challenge, we propose \textbf{ReBel} (\textbf{Re}ward-\textbf{Bel}ief), a process-level reinforcement learning framework for partially observable long-horizon tasks. The core insight is that effective credit assignment in partially observable settings requires more than reward signals and demands explicit supervision of belief dynamics that bridge observations and actions. Failures in long-horizon tasks often stem from persistent internal misunderstandings rather than poor action selection. Concretely, ReBel realizes this idea through two complementary design layers. First, it maintains a structured belief representation that summarizes the agent's inferred environment state from past interaction history. Rather than attempting to reconstruct a complete latent world model, this representation captures task-relevant state predicates that are sufficient for decision making. Second, ReBel introduces belief-consistency supervision, which converts mismatches between predicted beliefs and subsequent environment observations into dense self-supervised feedback. This provides process-level correction signals without requiring step-wise annotations or external verifiers. In addition, we design a belief-aware grouping strategy that compares trajectories under similar belief states, enabling more robust advantage estimation and reducing optimization variance. Together, these components form a closed iterative learning loop in which more accurate beliefs induce more informative comparisons, which in turn stabilize policy updates and further improve belief quality.

We evaluate ReBel on two partially observable long-horizon benchmarks: ALFWorld~\cite{ALFWorld20} for embodied task planning under incomplete observations and WebShop~\cite{yao2022webshop} for goal-driven web interactions under sparse delayed feedback. Using Qwen2.5-1.5B-Instruct~\cite{yang2024qwen25}, ReBel achieves success rates of \(93.2 \pm 4.1\%\) on ALFWorld~\cite{ALFWorld20} and \(75.1 \pm 2.7\%\) on WebShop~\cite{yao2022webshop}, improving over the strongest episode-level baseline GRPO~\cite{shao2024deepseekmath} by \(20.4\) percentage points and over the strongest step-level method \(\mathrm{GiGPO}_{\mathrm{w/o\ std}}\)~\cite{feng2025groupingrouppolicyoptimizationllm} by \(7.1\) and \(7.7\) percentage points on ALFWorld~\cite{ALFWorld20} and WebShop~\cite{yao2022webshop}, respectively. ReBel reaches GRPO's~\cite{shao2024deepseekmath} final performance in approximately \(35\) training iterations, yielding a \(2.1\times\) sample-efficiency gain, all without requiring dense human annotations or external step-wise verifiers. These results validate belief-aware RLVR as an effective framework for optimizing LLM agents in partially observable long-horizon settings.

In summary, our main contributions are threefold:
\begin{enumerate}
    \item We identify belief drift under partial observability as a key failure mode of RLVR in long-horizon interactive tasks, where delayed rewards exacerbate temporal credit assignment.
    \item We propose ReBel, a process-level reinforcement learning framework that combines structured belief representation, belief-consistency supervision, and belief-aware grouping to provide dense self-supervised feedback without external step-wise verifiers.
    \item We demonstrate substantial gains on ALFWorld~\cite{ALFWorld20} and WebShop~\cite{yao2022webshop}, showing that ReBel improves both task success and sample efficiency over strong episode-level and step-level baselines.
\end{enumerate}

\section{Related Work}

\paragraph{Reinforcement learning for LLM agents.}
As large language models (LLMs) are increasingly deployed as autonomous agents in program synthesis, embodied control, device interaction, and web navigation, reinforcement learning (RL) has become a central paradigm for improving their decision-making behavior. Early RLHF methods, such as InstructGPT~\cite{NEURIPS2022_b1efde53}, demonstrated that preference-based optimization can effectively align model outputs with human intent. More recent work has shifted from single-turn response optimization toward trajectory-level learning in interactive environments such as ALFWorld~\cite{ALFWorld20} and WebShop~\cite{yao2022webshop}. To improve exploration and long-horizon reasoning, methods such as RAP~\cite{hao-etal-2023-reasoning} and Agent Q~\cite{putta2024agent} incorporate Monte Carlo tree search (MCTS~\cite{swiechowski2023monte}) into training or inference, while ArCHer~\cite{zhou2024archer} introduces history-aware value estimation for agent trajectories. In parallel, critic-free optimization methods such as GRPO~\cite{shao2024deepseekmath} and RLOO~\cite{ahmadian2024rloo} reduce optimization overhead and improve training efficiency. Recent systems including DigiRL~\cite{bai2024digirl}, WebRL~\cite{qi2024webrl}, RAGEN~\cite{wang2025ragen}, and LOOP~\cite{chen2025reinforcement} further demonstrate the promise of RL for end-to-end device control and web interaction. Despite these advances, most existing methods still rely primarily on terminal rewards or assume near-complete observability, which makes them vulnerable to severe credit assignment challenges in partially observable long-horizon tasks.

\paragraph{Process supervision and belief modeling.}
A complementary line of work seeks to densify learning signals by providing supervision at the process level rather than only at the final outcome. Process supervision was advocated in early work by Uesato et al.~\cite{uesato2022solving} and Let's Verify Step by Step~\cite{lightman2023let}, which showed that intermediate verification can substantially improve reasoning and planning. Building on this idea, recent methods such as Math-Shepherd~\cite{wang-etal-2024-math}, PRIME~\cite{cui2025prime}, and Watch Every Step~\cite{xiong2024watch} explore automated or weakly supervised step-level reward construction, reducing reliance on expensive human annotations. More agent-oriented methods, including GiGPO~\cite{feng2025groupingrouppolicyoptimizationllm} and related follow-up work~\cite{liu2025agentic,zhuang2025enhancingagenticrlprogressive}, further refine optimization at the step level by grouping trajectories and estimating advantages over finer-grained units. However, these methods typically condition grouping and reward estimation on the current observation alone, which can be brittle when the underlying environment is only partially observed. In such settings, observation-conditioned grouping may fail to reflect the agent's latent state, leading to degraded grouping quality and unstable advantage normalization. In parallel, work on partially observable decision making has explored belief-centric approaches through retrieval augmentation~\cite{guu2020realm}, symbolic augmentation~\cite{mai2022symbolic}, and broader augmented LM paradigms~\cite{mialon2023augmented}, as well as intrinsic motivation mechanisms such as curiosity~\cite{schmidhuber1991possibility} and random network distillation~\cite{burda2018exploration}. While these methods can improve representation learning or exploration, they do not directly couple belief evolution with process-level optimization. ReBel addresses this gap by turning belief consistency into a dense self-supervised training signal, thereby aligning policy optimization with latent state tracking in partially observable long-horizon tasks.

\section{Method}
\label{sec:method}

\begin{figure*}[t]
  \centering
  \includegraphics[width=\textwidth]{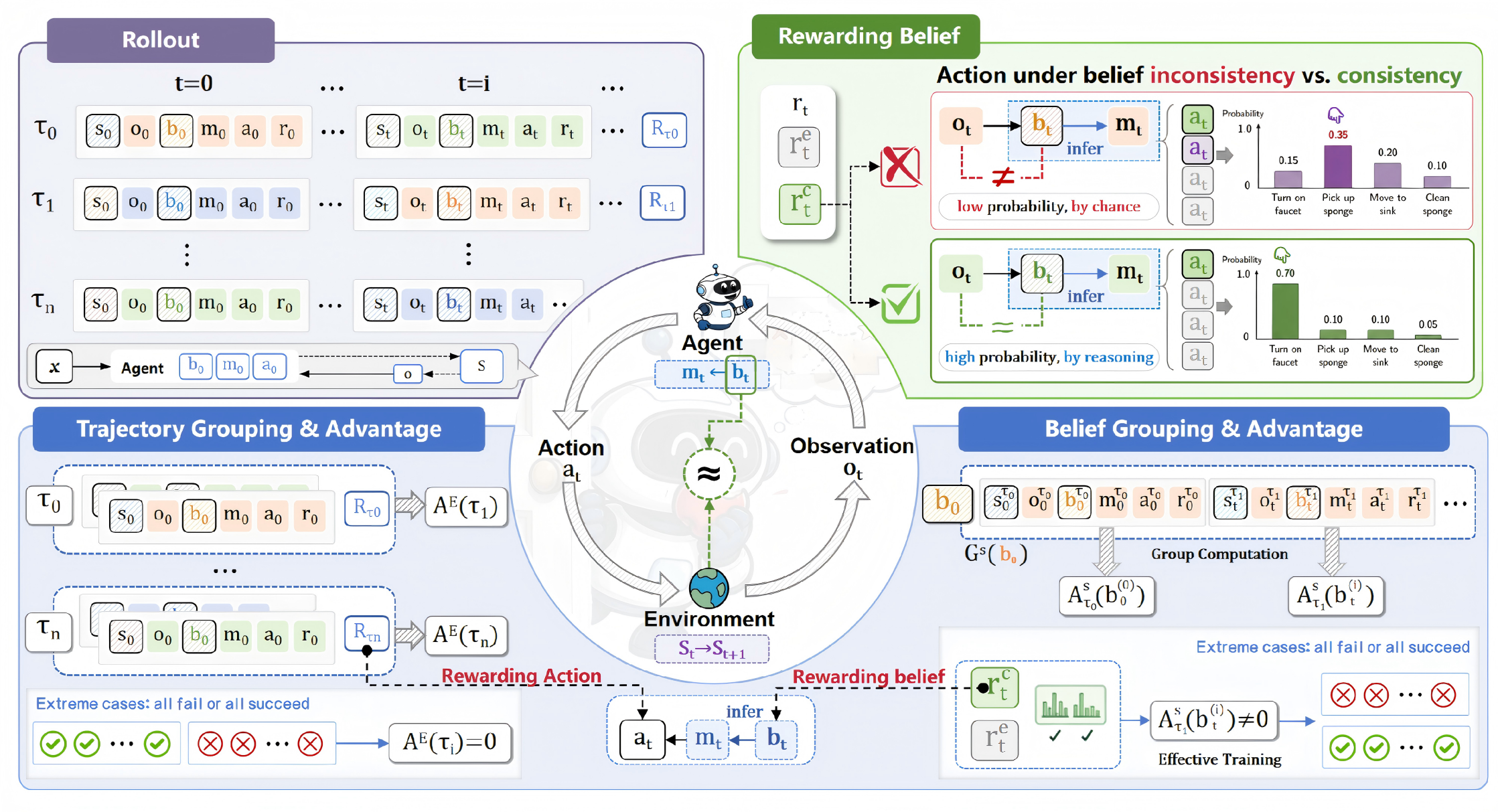}
  \caption{\textbf{Overview of ReBel.}
  ReBel learns belief-aware policies for partially observable long-horizon tasks by making latent belief explicit and decomposing policy generation into belief, think, and action. It turns sparse terminal rewards into step-wise belief consistency feedback and performs belief-anchor grouping to support stable step-level advantage estimation.}
  \label{fig:main}
\end{figure*}

Reinforcement learning with verifiable rewards (RLVR) has significantly improved the performance of large language model (LLM) agents on long-horizon tasks. However, extending this paradigm to partially observable environments remains challenging, primarily because the agent's internal reasoning process is not directly transparent. Outcome-based optimization such as vanilla GRPO~\cite{shao2024deepseekmath} provides only sparse terminal feedback, making it difficult to distinguish a logically sound trajectory from one that merely succeeds by chance. Action-centered process reward models alleviate this sparsity to some extent, but they often remain insensitive to the underlying belief state and may therefore reinforce actions driven by incorrect internal assumptions. We refer to this phenomenon as \textbf{credit drift}, where spurious successes contaminate credit assignment and lead to high-variance updates as well as poor generalization.

To address this issue, we propose \textbf{ReBel} (\textbf{Re}warding \textbf{Bel}ief), a framework that shifts process supervision from what the agent does to why it believes an action should work. ReBel introduces a two-level supervision structure: (i) \textbf{consistency-based supervision}, which penalizes mismatches between the agent's predicted belief state and subsequent observations; and (ii) \textbf{belief-anchored advantage estimation}, which compares actions only within semantically aligned belief clusters. Together, these components enable process-level optimization that explicitly aligns policy learning with latent-state tracking in partially observable long-horizon tasks.

\subsection{Belief-Structured Interaction in POMDPs}

We consider the general setting in which large language model (LLM) agents perform multi-step tasks in a partially observable Markov decision process (POMDP). The process is defined by the tuple \(\left(\mathcal{S},\mathcal{A},\Omega,\mathcal{T},\mathcal{O},\mathcal{R},\gamma\right)\), where \(\mathcal{S}\) denotes the latent state space, \(\mathcal{A}\) the action space, \(\Omega\) the observation space, \(\mathcal{T}\) the transition kernel, \(\mathcal{O}\) the observation model, \(\mathcal{R}\) the reward function, and \(\gamma\) the discount factor. Given a task instruction \(x \in \mathcal{X}\), the agent receives a local observation \(o_t \in \Omega\) at each discrete time step \(t = 1, \ldots, T\). Since the underlying environment state \(s_t \in \mathcal{S}\) is not directly accessible, the agent must reason over a history trace \(h_t = (o_1, a_1, o_2, a_2, \ldots, a_{t-1}, o_t)\), which records both observations and prior decisions. At each interaction step, the policy \(\pi_\theta\) generates a structured output field \(y_t = (b_t, z_t, a_t)\), where \(b_t\) denotes a structured belief, \(z_t\) an intermediate thought state, and \(a_t \in \mathcal{A}\) an executable action. The environment then returns a scalar reward \(r_t \in \mathbb{R}\). In realistic long-horizon scenarios, reward signals are typically extremely sparse and delayed, often appearing only at the end of an episode as a terminal indicator of success or failure. When a single trajectory spans thousands of tokens, credit assignment is no longer merely a sparse-reward problem; rather, it becomes an attribution problem across a long generation chain and multiple internal decisions. In particular, it is often difficult to determine whether the final outcome should be attributed to a specific observation interpretation, an intermediate reasoning step, or a particular action choice.

To address this challenge, ReBel does not directly map history to actions in an end-to-end manner. Instead, it explicitly reconstructs each generation step as a Belief-Think-Action process. Specifically, before executing an action, the agent first constructs a structured belief \(b_t\) from the history \(h_t\) and task instruction \(x\), where \(b_t\) explicitly encodes the agent's assessment of the current environment state. The agent then performs internal reasoning conditioned on \(b_t\) to produce an intermediate thought state \(z_t\), and finally outputs an executable action \(a_t\) based on both \(b_t\) and \(z_t\). This process is factorized as
\begin{equation}
\pi_{\theta}(b_t,z_t,a_t \mid h_t,x)
=
\pi_{\theta}(b_t \mid h_t,x)\,
\pi_{\theta}(z_t \mid h_t,x,b_t)\,
\pi_{\theta}(a_t \mid h_t,x,b_t,z_t).
\label{eq:bta_factorization}
\end{equation}
Here, \(b_t \subseteq P = \{p_1,\ldots,p_K\}\) denotes a task-relevant belief set composed of predefined predicates. It is worth noting that \(b_t\) is not the Bayesian posterior belief in the standard POMDP sense, but rather a symbolic approximation of the current environment state, used to explicitly represent the agent's state assessment. Unlike methods that compress internal reasoning into a single latent variable, we explicitly model belief \(b_t\), thought \(z_t\), and action \(a_t\), thereby making the intermediate cognitive process directly supervisable. Accordingly, training signals can come not only from trajectory-level outcomes, but also from intermediate beliefs, reasoning processes, and action decisions. By organizing the generation process as \(b_t \rightarrow z_t \rightarrow a_t\), ReBel converts the credit assignment problem in long-horizon tasks into local constraints on the consistency between intermediate beliefs and subsequent behaviors, thereby providing a more stable learning signal.

\subsection{Belief Consistency Supervision}

The effectiveness of explicit belief lies not in its static representation, but in whether it can remain consistent with evidence generated through environmental evolution over time. Based on this insight, ReBel introduces an observation-based dense supervision mechanism that decomposes sparse terminal feedback into step-wise belief consistency evaluations, enabling belief quality to be optimized at each time step. Let \(P=\{p_1,\ldots,p_K\}\) denote the set of task-relevant predicates. We represent the belief at time step \(t\) as a binary vector over the predicate space, \(b_t \in \{0,1\}^K\), where \(b_{t,k}=1\) indicates that the agent believes predicate \(p_k\) to be true at time \(t\), and \(b_{t,k}=0\) indicates it to be false. To characterize the consistency between belief and observations, we define a consistency indicator function \(C_k(b_t,a_t,o_{t+1}) \in \{0,1\}\), which determines whether the judgment on predicate \(p_k\) encoded in \(b_t\) can be supported by the next observation \(o_{t+1}\) after executing action \(a_t\). Here, action \(a_t\) influences observations indirectly by inducing state transitions in the environment; therefore, \(C_k\) measures whether the current belief remains supported by evidence after environmental evolution.

Since not all predicates are verifiable at every time step, we further introduce an observability mask \(m_t \in \{0,1\}^K\), where \(m_{t,k}=1\) indicates that predicate \(p_k\) can be checked against observation \(o_{t+1}\), and \(m_{t,k}=0\) indicates that the predicate is temporarily unverifiable at the current step. Based on this mask, the belief consistency reward is defined as
\[
r_t^{\mathrm{cons}}
=
\frac{\sum_{k=1}^{K} m_{t,k} C_k(b_t,a_t,o_{t+1})}{\sum_{k=1}^{K} m_{t,k}}
\quad \text{if} \quad
\sum_{k=1}^{K} m_{t,k} > 0.
\]

When \(\sum_{k=1}^{K} m_{t,k}=0\), the consistency reward is undefined for that step. This treatment avoids interpreting an unverifiable state as inconsistent, ensuring that the supervision signal is derived only from observable evidence. In other words, a time step contributes to consistency supervision only when at least one predicate is checkable; for belief items that are temporarily unobservable, no additional training bias is imposed.

To handle predicates that become verifiable only after a delay, we maintain a pending belief buffer that stores historical belief items for which no validation evidence has yet been obtained. We define
\[
\mathcal{U}_t
=
\left\{
(p_k,t')
\,\middle|\,
b_{t',k}=1,\ 
t' \leq t,\ 
\sum_{\tau=t'}^{t} m_{\tau,k}=0
\right\},
\]
where \(\mathcal{U}_t\) denotes the set of belief items that have not yet been validated by observation up to time step \(t\). If a predicate is not observable when it is generated, the corresponding belief is buffered and excluded from reward computation. Once the predicate becomes observable later, its consistency signal is assigned to the original generation step \(t'\). This mechanism enables temporal credit assignment for delayed evidence beyond single-step observability.

\subsection{Belief-Anchor Step-wise Advantage}

While group-based reinforcement learning reduces variance through relative benchmarking across trajectories, relying solely on terminal feedback in partially observable long-horizon tasks leads to severe \textbf{credit drift}, making it difficult to identify the true contribution of specific steps within a sequence. Although introducing step-wise advantage estimation could refine process-level supervision, it encounters the \textbf{singleton group problem} in POMDP scenarios: since the environment state \(s\) is not directly observable and long-horizon trajectories are highly unique in physical space, different trajectories rarely encounter identical physical states at the same time step. Consequently, state-based grouping often collapses into isolated samples, causing local relative advantage signals to fail due to a lack of comparable benchmarks.

To overcome this bottleneck, ReBel shifts the benchmark for credit assignment from the unobservable physical state \(s\) to an explicit \textbf{belief anchor \(b\)}. The core logic of this shift is that, in partially observable environments, the prerequisite for evaluating action quality is no longer ``external consistency'' but rather ``internal cognitive equivalence.'' As long as the agent's belief predicate sets \(b_t\) are logically equivalent, action choices across different trajectories share a common semantic background for benchmarking. Specifically, let the \(i\)-th trajectory be \(\tau_i = \{(o_{i,t}, b_{i,t}, z_{i,t}, a_{i,t}, r_{i,t}^{\mathrm{env}}, r_{i,t}^{\mathrm{cons}})\}_{t=1}^{T_i}\). We define the step-wise total reward as \(r_{i,t}^{\mathrm{tot}} = r_{i,t}^{\mathrm{env}} + \eta r_{i,t}^{\mathrm{cons}}\), and its corresponding step-wise discounted return as \(G_{i,t}^{\mathrm{step}} = \sum_{k=t}^{T_i} \gamma^{k-t} r_{i,k}^{\mathrm{tot}}\). This return not only reflects the action's contribution to the final task goal but also captures its immediate maintenance of cognitive accuracy through \(r_{i,t}^{\mathrm{cons}}\).

Next, we leverage the symbolic representation of the predicate space \(P\) to aggregate trajectory nodes---which are otherwise isolated in physical state space---into semantically consistent comparison groups. For a belief anchor \(\tilde{b} \in \{0,1\}^K\), its corresponding step-wise group is defined as \(\mathcal{G}_S(\tilde{b}) = \{(i,t) \mid b_{i,t} = \tilde{b}\}\). Since belief representations are semantic abstractions of complex physical states, this anchoring mechanism significantly increases the coverage of comparable samples, thereby resolving the singleton group problem. On this basis, we define the belief-anchored step-wise advantage as:
\begin{equation}
A_S(i,t) = 
\begin{cases} 
0, & |\mathcal{G}_S(b_{i,t})| = 1 \\ 
\frac{G_{i,t}^{\mathrm{step}} - \mu(\mathcal{G}_S(b_{i,t}))}{\sigma(\mathcal{G}_S(b_{i,t})) + \epsilon}, & \text{otherwise}
\end{cases}
\end{equation}
where \(\mu\) and \(\sigma\) denote the mean and standard deviation of returns within the group \(\mathcal{G}_S\), respectively. This mechanism achieves the following core advantages through cognitive benchmarking:

\begin{itemize}[leftmargin=*, itemsep=2pt, topsep=2pt, parsep=0pt]
    \item \textbf{Explicit Decoupling of Decision and Cognition:} By comparing actions within the same belief anchor, the model can tell whether low return comes from wrong reasoning or from a poor action choice. This helps the agent learn the best action under its current belief.
    \item \textbf{Learning Resilience under Sparse Feedback:} When the environment reward is zero for all samples in a group, the relative advantage still captures small differences in belief maintenance. This lets the model strengthen actions that keep the belief correct and reduce belief drift.
\end{itemize}

In summary, Belief-Anchor Step-wise Advantage transforms credit assignment from blind competition over outcomes into precise selection under logically equivalent conditions. It provides a more stable process-level supervision signal that aligns with the nature of decision-making for long-horizon agents in partially observable environments.

\subsection{Belief-aware Policy Optimization}

ReBel optimizes a belief-aware policy by combining trajectory-level outcome signals with belief-anchor step-wise advantages. For the $i$-th trajectory $\tau_i$ and time step $t=1,\ldots,T_i$, we define the total advantage as $A^{\mathrm{tot}}_{i,t} = A_E(\tau_i) + \omega A_S(i,t)$, where $A_E(\tau_i)$ captures the relative terminal quality of the trajectory within its group, $A_S(i,t)$ measures the relative quality of the current step under the same belief anchor, and $\omega \geq 0$ balances the two terms.

\begin{equation}
\begin{aligned}
J(\theta)
=
\mathbb{E}_{\tau_i \sim \pi_{\theta_{\mathrm{old}}}}
\Bigg[
\frac{1}{T_i}
\sum_{t=1}^{T_i}
\Big(
&\min\big(
\rho_{i,t}(\theta)A^{\mathrm{tot}}_{i,t},
\operatorname{clip}(\rho_{i,t}(\theta),1-\epsilon,1+\epsilon)A^{\mathrm{tot}}_{i,t}
\big) \\
&-\beta D_{\mathrm{KL}}\big(\pi_\theta(\cdot \mid h_{i,t}) \Vert \pi_{\mathrm{ref}}(\cdot \mid h_{i,t})\big)
\Big)
\Bigg].
\end{aligned}
\end{equation}

Here, $\epsilon$ controls the clipping range and $\beta$ regularizes the deviation from the reference policy $\pi_{\mathrm{ref}}$. The importance ratio is defined over the joint generation of belief, thought, and action as

\begin{equation}
\rho_{i,t}(\theta)
=
\frac{\pi_\theta(b_{i,t},z_{i,t},a_{i,t}\mid h_{i,t})}
{\pi_{\theta_{\mathrm{old}}}(b_{i,t},z_{i,t},a_{i,t}\mid h_{i,t})}.
\end{equation}

Using the Belief-Think-Action factorization, $\pi_\theta(b_{i,t},z_{i,t},a_{i,t}\mid h_{i,t}) = \pi_\theta(b_{i,t}\mid h_{i,t})\pi_\theta(z_{i,t}\mid h_{i,t},b_{i,t})\pi_\theta(a_{i,t}\mid h_{i,t},b_{i,t},z_{i,t})$, this objective propagates learning signals to belief formation, internal reasoning, and action selection, rather than to the final action alone. As a result, ReBel assigns credit not only to successful outcomes, but also to belief-consistent intermediate decisions, enabling temporally grounded credit assignment in partially observable long-horizon tasks.

\section{Experiments}
\label{sec:experiments}

We structure our experiments around three questions: (i) whether \textsc{ReBel} outperforms strong baselines in partially observable environments; (ii) how much each of its main components contributes to performance; and (iii) whether \textsc{ReBel} improves sample efficiency and mitigates belief drift in long-horizon decision making.

\subsection{Experimental Setup}
\label{subsec:setup}

\begin{table*}[t]
\centering
\caption{Performance comparison on ALFWorld~\cite{ALFWorld20} and WebShop~\cite{yao2022webshop}.
  All RL methods are initialized from Qwen2.5-1.5B-Instruct~\cite{yang2024qwen25};
  results are reported as mean\textsubscript{$\pm$std} over 3 random seeds.
  \textbf{Bold}: best result among RL-trained methods.
  ALFWorld~\cite{ALFWorld20} reports per-task and overall success rate (\%);
  WebShop~\cite{yao2022webshop} reports average score and task success rate (\%).
  $\dagger$: initialized with SFT warm start on belief-annotated data;
  the isolated contribution of SFT warm start is quantified in Appendix~\ref{subsec:app_decomp}.}
\label{tab:main_results}
\resizebox{\textwidth}{!}{%
\begin{tabular}{llccccccc|cc}
\toprule
\multirow{2}{*}{Paradigm} & \multirow{2}{*}{Method}
  & \multicolumn{7}{c|}{\textbf{ALFWorld}}
  & \multicolumn{2}{c}{\textbf{WebShop}} \\
  & & Pick & Look & Clean & Heat & Cool & Pick2 & Overall
  & Score & SR \\
\midrule
\multicolumn{11}{l}{\textit{Closed-source frontier models (zero-shot reference)}} \\
Prompt & GPT-4o         & 75.3 & 60.8 & 31.2 & 56.7 & 21.6 & 49.8 & 48.0 & 31.8 & 23.7 \\
Prompt & Gemini-2.5-Pro & 92.8 & 63.3 & 62.1 & 69.0 & 26.6 & 58.7 & 60.3 & 42.5 & 35.9 \\
\midrule
\multicolumn{11}{l}{\textit{Prompt-based agents (Qwen2.5-1.5B-Instruct)}} \\
Prompt & Qwen2.5   & 5.9  & 5.5  & 3.3  & 9.7  & 4.2  & 0.0  & 4.1  & 23.1 & 5.2  \\
Prompt & ReAct     & 17.4 & 20.5 & 15.7 & 6.2  & 7.7  & 2.0  & 12.8 & 40.1 & 11.3 \\
Prompt & Reflexion & 35.3 & 22.2 & 21.7 & 13.6 & 19.4 & 3.7  & 21.8 & 55.8 & 21.9 \\
\midrule
\multicolumn{11}{l}{\textit{RL-trained agents (Qwen2.5-1.5B-Instruct, 3 seeds)}} \\
RL & PPO  & 64.8\textsubscript{$\pm$3.5} & 40.5\textsubscript{$\pm$6.9} & 57.1\textsubscript{$\pm$4.9} & 60.6\textsubscript{$\pm$6.6} & 46.4\textsubscript{$\pm$4.0} & 47.4\textsubscript{$\pm$1.9} & 54.4\textsubscript{$\pm$3.1} & 73.8\textsubscript{$\pm$3.0} & 51.5\textsubscript{$\pm$2.9} \\
RL & RLOO & 88.3\textsubscript{$\pm$3.0} & 52.8\textsubscript{$\pm$8.6} & 71.0\textsubscript{$\pm$5.9} & 62.8\textsubscript{$\pm$8.7} & 66.4\textsubscript{$\pm$5.5} & 56.9\textsubscript{$\pm$4.7} & 69.7\textsubscript{$\pm$2.5} & 73.9\textsubscript{$\pm$5.6} & 52.1\textsubscript{$\pm$6.7} \\
RL & GRPO & 85.3\textsubscript{$\pm$1.5} & 53.7\textsubscript{$\pm$8.0} & 84.5\textsubscript{$\pm$6.8} & 78.2\textsubscript{$\pm$7.9} & 59.7\textsubscript{$\pm$5.0} & 53.5\textsubscript{$\pm$5.6} & 72.8\textsubscript{$\pm$3.6} & 75.8\textsubscript{$\pm$3.5} & 56.8\textsubscript{$\pm$3.8} \\
RL & GiGPO$_{\text{w/std}}$   & 94.4\textsubscript{$\pm$5.9} & 67.5\textsubscript{$\pm$4.6} & \textbf{94.8}\textsubscript{$\pm$3.8} & 94.4\textsubscript{$\pm$7.8} & 79.8\textsubscript{$\pm$4.7} & 76.4\textsubscript{$\pm$5.4} & 86.7\textsubscript{$\pm$1.7} & 83.1\textsubscript{$\pm$1.6} & 65.0\textsubscript{$\pm$3.2} \\
RL & GiGPO$_{\text{w/o std}}$ & \textbf{96.0}\textsubscript{$\pm$1.4} & 76.5\textsubscript{$\pm$3.9} & 91.8\textsubscript{$\pm$5.5} & 91.3\textsubscript{$\pm$6.3} & 71.7\textsubscript{$\pm$8.4} & 79.5\textsubscript{$\pm$7.7} & 86.1\textsubscript{$\pm$4.7} & \textbf{83.5}\textsubscript{$\pm$1.8} & 67.4\textsubscript{$\pm$4.5} \\
\rowcolor{gray!12}
RL & \textbf{ReBel (Ours)}$\dagger$
   & 91.8\textsubscript{$\pm$2.1} & \textbf{84.2}\textsubscript{$\pm$4.3} & 91.3\textsubscript{$\pm$3.1} & \textbf{95.8}\textsubscript{$\pm$4.3} & \textbf{84.8}\textsubscript{$\pm$4.2} & \textbf{96.5}\textsubscript{$\pm$3.2} & \textbf{93.2}\textsubscript{$\pm$4.1} & 79.8\textsubscript{$\pm$2.8} & \textbf{75.1}\textsubscript{$\pm$2.7} \\
\bottomrule
\end{tabular}%
}
\end{table*}

We evaluate \textsc{ReBel} on ALFWorld~\cite{ALFWorld20} and WebShop~\cite{yao2022webshop}, two representative benchmarks for partially observable decision making. We compare against closed-source frontier models, prompt-based open-source agents, and RL-trained baselines, covering zero-shot prompting, in-context reasoning, and policy optimization settings. All methods use Qwen2.5-1.5B-Instruct~\cite{yang2024qwen25} as the backbone model, and \textsc{ReBel} is initialized with an SFT warm start before RL training to stabilize the structured \(\langle\texttt{belief}\rangle\) output format. Unless otherwise specified, all results are reported as mean \(\pm\) standard deviation. Additional dataset statistics, baseline descriptions, hyperparameters, and implementation details are provided in Appendix~\ref{sec:app_exp_details}.

\subsection{Main Results}
\label{subsec:main_results}

Table~\ref{tab:main_results} shows that \textsc{ReBel} establishes a new performance frontier, achieving an overall success rate of \(93.2 \pm 4.1\%\) on ALFWorld~\cite{ALFWorld20} and \(75.1 \pm 2.7\%\) on WebShop~\cite{yao2022webshop}. Compared with the strongest episode-level baseline, GRPO~\cite{shao2024deepseekmath}, this corresponds to improvements of \(+20.4\) and \(+18.3\) percentage points, respectively. Although closed-source frontier models such as Gemini-2.5-Pro and GPT-4o exhibit strong general reasoning capabilities, their performance on these long-horizon partially observable tasks remains substantially lower than that of RL-trained agents, suggesting that generic zero-shot reasoning is insufficient for maintaining state coherence over long trajectories. Prompt-based agents such as ReAct and Reflexion perform even worse, indicating that without parameter updates, errors accumulate rapidly in these environments.

\textsc{ReBel} also consistently outperforms the strongest step-level baseline, \(\text{GiGPO}_{\text{w/o std}}\)~\cite{feng2025groupingrouppolicyoptimizationllm}, by \(+7.1\) points on ALFWorld~\cite{ALFWorld20} and \(+7.7\) points on WebShop~\cite{yao2022webshop}. This gain suggests that belief-based credit assignment provides a more semantically stable learning signal than observation-hash grouping, which can conflate distinct latent states under surface-level similarity. In contrast, \textsc{ReBel} propagates supervision through structured belief representations, enabling the model to distinguish temporally separated but semantically related decisions.

The benefits of \textsc{ReBel} are further reflected in its sample efficiency and robustness on harder tasks. As shown in Figure~\ref{fig:main_wide}(a), \textsc{ReBel} matches the final performance of GRPO~\cite{shao2024deepseekmath} after only about \(35\) training iterations, corresponding to an approximate \(2.1\times\) improvement in sample efficiency. Moreover, Figure~\ref{fig:main_wide}(b) and (c) show that the performance gap between \textsc{ReBel} and GRPO~\cite{shao2024deepseekmath} grows with task difficulty and trajectory length. In the most challenging ALFWorld~\cite{ALFWorld20} task, \textbf{Pick2}, which requires the longest reasoning chain, \textsc{ReBel} achieves a success rate of \(96.5\%\), outperforming the best GiGPO~\cite{feng2025groupingrouppolicyoptimizationllm} variant by \(+17.0\) points. Overall, these results indicate that structured belief tracking is a key factor behind \textsc{ReBel}'s gains in partially observable long-horizon decision making.

\begin{figure*}[t]
  \centering
  \includegraphics[width=\textwidth]{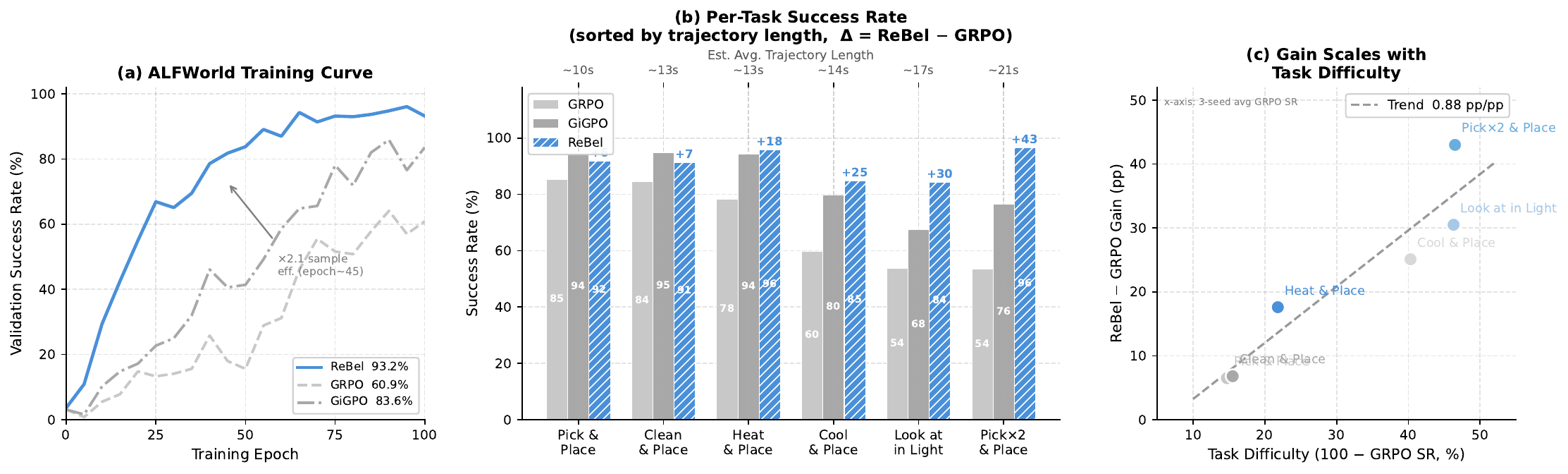}
  \caption{\textbf{Training dynamics and per-task performance.}
  (a) ALFWorld~\cite{ALFWorld20} training curves. \textsc{ReBel} reaches the final GRPO~\cite{shao2024deepseekmath} performance after roughly \(35\) iterations, corresponding to an approximate \(2.1\times\) improvement in sample efficiency.
  (b) Per-task success rates on ALFWorld~\cite{ALFWorld20}, sorted by estimated trajectory length; \(\Delta\) denotes the improvement of \textsc{ReBel} over GRPO~\cite{shao2024deepseekmath}.
  (c) The gain of \textsc{ReBel} over GRPO~\cite{shao2024deepseekmath} increases with task difficulty, indicating that structured belief tracking is more beneficial in harder partially observable tasks.}
  \label{fig:main_wide}
\end{figure*}

\subsection{Ablation Study}
\label{subsec:ablation}

\begin{wraptable}{r}{0.5\linewidth}
\centering
\caption{\textbf{Incremental ablation from GRPO~\cite{shao2024deepseekmath} to full \textsc{ReBel}.}
B0--B3 incrementally add the main modules in a stepwise fashion. B0 is the GRPO~\cite{shao2024deepseekmath} baseline initialized from SFT, and each later variant adds one new component to measure its contribution to the final performance.}
\label{tab:ablation}

\small
\resizebox{\linewidth}{!}{
\begin{tabular}{lccccc}
\toprule
Method & Prompt & Group & \makecell{Step\\Adv.} & \makecell{Belief\\Reward} & ALFWorld \\
\midrule
B0: GRPO & $\times$ & $\times$ & $\times$ & $\times$ & 60.9 \\
B1: + Prompt & $\checkmark$ & $\times$ & $\times$ & $\times$ & 78.1 \\
B2: + Group & $\checkmark$ & $\checkmark$ & $\checkmark$ & $\times$ & 93.0 \\
\textbf{B3: \textsc{ReBel}} & $\checkmark$ & $\checkmark$ & $\checkmark$ & $\checkmark$ & \textbf{96.9} \\
\bottomrule
\end{tabular}
}
\end{wraptable}

Table~\ref{tab:ablation} decomposes the performance gains of \textsc{ReBel} into three key factors: explicit representation, fine-grained credit assignment, and consistency supervision. Moving from vanilla GRPO~\cite{shao2024deepseekmath} (B0) to the belief-prompted variant (B1) yields a 17.2\% improvement, indicating that making latent state tracking explicit through structured reasoning alleviates the burden of encoding history in internal memory. However, although B1 improves the representational form, it still relies primarily on high-variance trajectory-level rewards; as the task horizon increases, such rewards become increasingly insufficient for providing precise optimization guidance.

The most substantial improvement occurs with B2 (93.0\%). This variant mitigates credit diffusion in long-horizon tasks through belief-aligned semantic grouping and step-wise advantage (\(A_{\text{step}}\)). Specifically, by providing dense optimization signals over belief equivalence classes, the model can more accurately identify critical intermediate subgoals. As shown in Figure~2b, this fine-grained feedback is particularly effective in long-horizon tasks such as \textbf{Pickx2 \& Place}, where trajectory-level rewards are often diluted by noise from lengthy action sequences. This mechanism also explains the \(2.1\times\) gain in sample efficiency observed during the early stages of training in Figure~2a.

The full \textsc{ReBel} model (B3, 96.9\%) stabilizes the reasoning--action loop through the auxiliary belief reward \(r^{\text{bel}}\). This reward serves as an alignment anchor, preventing reasoning drift---a phenomenon in which the agent may produce the correct action based on hallucinated state representations. By continuously constraining the policy to remain grounded in the state space defined by environmental predicates, \(r^{\text{bel}}\) improves robustness in partially observable settings, with particularly clear benefits in tasks with greater observation sparsity and deeper reasoning requirements (see Figure~2c). Overall, these results show that the performance gains of \textsc{ReBel} arise from the synergy of structured representation (B1), fine-grained credit assignment (B2), and grounded supervision (B3).

\subsection{Grouping Quality and Efficiency}
\label{subsec:grouping}

\begin{figure*}[t]
  \centering
  \includegraphics[width=\textwidth]{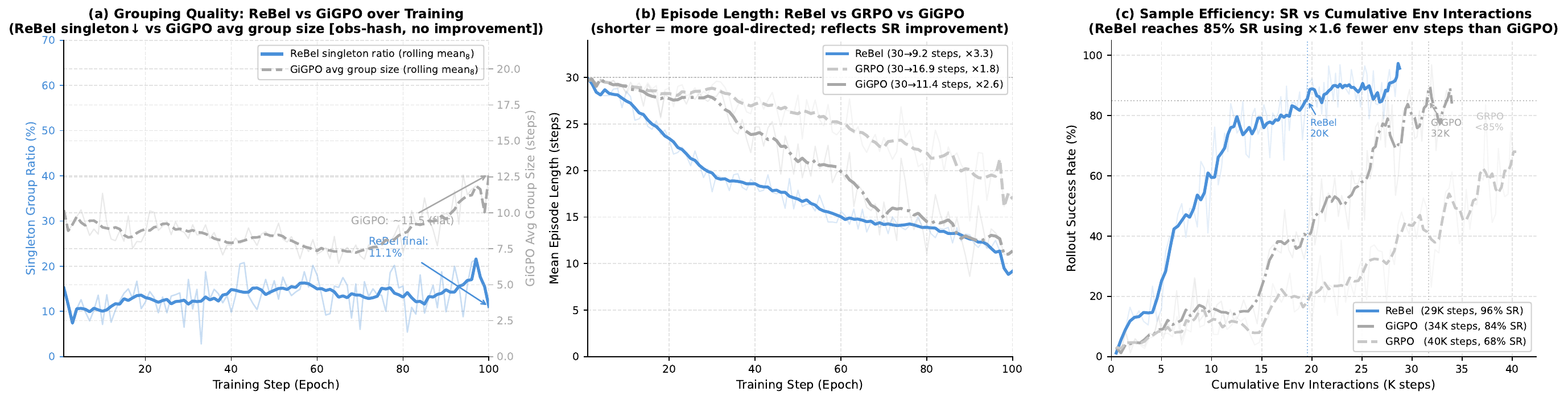}
\caption{\textbf{Grouping quality and training efficiency.}
(a) Singleton ratio for \textsc{ReBel} and GiGPO~\cite{feng2025groupingrouppolicyoptimizationllm} during training. The average group size for GiGPO~\cite{feng2025groupingrouppolicyoptimizationllm} is shown on the right axis.
(b) Average episode length on ALFWorld~\cite{ALFWorld20}. \textsc{ReBel} reduces the average episode length from about 29.9 steps to 9.2 steps, a 3.2$\times$ reduction.
(c) Success rate versus cumulative environment interactions. \textsc{ReBel} reaches 85\% rollout success with roughly 1.6$\times$ fewer environment steps than GiGPO~\cite{feng2025groupingrouppolicyoptimizationllm} and earlier than GRPO~\cite{shao2024deepseekmath}.}
  \label{fig:groupeff}
\end{figure*}

\textbf{Why grouping quality matters: it determines the usefulness of credit assignment.}
As shown in Figure~\ref{fig:groupeff}(a), GiGPO~\cite{feng2025groupingrouppolicyoptimizationllm} produces a relatively high fraction of singleton groups during training, indicating that many states cannot be stably aggregated into semantically coherent sets. An excessive number of singleton groups weakens step-wise normalization and makes advantage estimates more susceptible to noise. In contrast, \textsc{ReBel} consistently maintains a low singleton ratio, while the average group size increases steadily throughout training, suggesting that it can continuously form more stable and semantically coherent belief-based groups. In other words, \textsc{ReBel} does not merely create more groups. It forms groups that are more stable and semantically coherent.

\textbf{Better grouping quality translates into higher task execution efficiency.}
Figure~\ref{fig:groupeff}(b) shows that \textsc{ReBel} reduces the average episode length on ALFWorld~\cite{ALFWorld20} from about 29.9 steps to 9.2 steps, corresponding to a 3.2$\times$ reduction. This means that the model can form effective sub-goal strategies faster and complete tasks with fewer decision steps. By comparison, GiGPO~\cite{feng2025groupingrouppolicyoptimizationllm} yields only modest improvements, while GRPO~\cite{shao2024deepseekmath} shows almost no reduction. This result suggests that higher-quality grouping not only improves credit assignment, but also makes it easier for the policy to learn efficient behavioral paths, such as more accurate object selection and more compact action sequences.

\textbf{At the training level, this advantage further manifests as higher sample efficiency and faster convergence.}
Figure~\ref{fig:groupeff}(c) plots the success rate as a function of cumulative environment interactions. To reach 85\% rollout success, \textsc{ReBel} requires roughly 1.6$\times$ fewer environment steps than GiGPO~\cite{feng2025groupingrouppolicyoptimizationllm} and reaches this level earlier than GRPO~\cite{shao2024deepseekmath}. Meanwhile, \textsc{ReBel}'s convergence curve is smoother and its confidence band is narrower, indicating a more stable training process. Overall, Figures~\ref{fig:groupeff}(a)--(c) form a clear chain. Better belief grouping leads to more reliable credit assignment. This shortens task execution and ultimately improves sample efficiency and convergence.

\section{Conclusion}
\label{sec:conclusion}

We introduce \textsc{ReBel}, a process-supervised reinforcement learning framework that opens a new and promising path toward addressing credit assignment in long-horizon agentic tasks by optimizing explicit belief-state consistency, marking a shift from purely outcome-driven optimization to deeper process-level alignment. Rather than relying solely on terminal rewards, \textsc{ReBel} provides dense step-level supervision through the belief-consistency reward to effectively mitigate cascading failures, while employing a belief-grouping mechanism to identify semantically coherent phases in partially observable environments and enable more precise advantage normalization. Experimental results show that, using only a \(1.5\text{B}\)-parameter backbone, \textsc{ReBel} achieves success rates of \(93.2\%\) on ALFWorld and \(75.1\%\) on WebShop, marking substantial improvements of \(+20.4\) and \(+18.3\) percentage points over the strong GRPO baseline, while enhancing sample efficiency by \(2.1\times\). Our ablation studies further confirm that the integration of belief consistency and semantic grouping produces additive performance gains, underscoring that the self-alignment of an agent's internal world model is essential for achieving efficient and robust autonomous intelligence.

\bibliographystyle{plainnat}
\bibliography{custom}

\newpage

\appendix

\section{Experimental Details}
\label{sec:app_exp_details}

\subsection{Computational Details}

For both ALFWorld and WebShop, we conduct experiments on 4$\times$A800 GPUs using \texttt{Qwen2.5-1.5B-Instruct} as the base model. Each setting is trained for \(100\) iterations. During inference, we use vLLM as the rollout backend with tensor parallelism set to \(1\) and GPU memory utilization fixed at \(0.65\). Unless otherwise specified, all methods are evaluated under the same training budget and environment configuration to ensure a fair comparison.

\subsection{Dataset Details}

\subsubsection{ALFWorld}

We evaluate our method on ALFWorld, a text-based embodied interaction benchmark designed to assess language models in multi-step decision-making scenarios. Unlike static question answering tasks, ALFWorld features a dynamic environment in which the state evolves in response to the agent's actions. At each step, the model must condition on the current observation, historical trajectory, and task goal to decide the next action. As a result, the benchmark not only tests natural language instruction following, but also challenges the model's state tracking, planning, and error correction capabilities in a changing environment.

ALFWorld contains \(3{,}827\) task instances spanning six representative household sub-tasks: Pick \& Place, Examine in Light, Clean \& Place, Heat \& Place, Cool \& Place, and Pick Two \& Place. These sub-tasks differ substantially in action dependency structure, completion conditions, and state-transition patterns. Following the standard protocol, we evaluate on a fixed set of \(128\) held-out tasks, with episode lengths ranging from \(8\) to \(30\) steps. Owing to the diversity in decision horizon and operational complexity across sub-tasks, ALFWorld provides a comprehensive testbed for evaluating generalization in complex embodied environments. From the perspective of ReBel, the key challenge lies in maintaining a persistent understanding of the environment state across multiple interactions and progressively approaching the goal through a sequence of intermediate actions. This makes ALFWorld particularly suitable for assessing whether the model can establish a stable ``belief update--reasoning--action'' decision loop.

\subsubsection{WebShop}

We further evaluate our method on WebShop, a web-based shopping environment designed to test multi-step retrieval, filtering, and decision-making under partial information. In this task, an agent must complete operations such as product search, page navigation, attribute filtering, item comparison, and final purchase within a simulated e-commerce website according to a user instruction.

WebShop contains more than \(1.1\) million products and over \(12{,}000\) user instructions. The maximum episode length is \(15\) steps, and the reward lies in the interval \([0,1]\), with high rewards available only when the agent successfully completes the target. Since product information is often incomplete and the set of available actions changes across pages, WebShop imposes strong requirements on the model's retrieval ability, constraint satisfaction, and multi-step decision-making. From the perspective of ReBel, WebShop exhibits a typical ``reasoning-driven decision-making'' setting: at each step, the model must act based on the current page state and the interaction history, while each intermediate decision directly affects future reachable states and the final return. This makes the benchmark well suited for evaluating whether our method can maintain stable trajectory-level reasoning and policy consistency in complex web environments.

\subsection{Implementation Details}

\textbf{Model and cold-start setup.} We perform all experiments with \texttt{Qwen2.5-1.5B-Instruct}. To ensure that the model acquires basic task-format understanding and structured generation capability before entering the reinforcement learning stage, we adopt an SFT cold-start procedure prior to RL training. Specifically, we first fine-tune the base model on expert trajectories, using \(390\) trajectories for ALFWorld and \(500\) trajectories for WebShop. The SFT stage is trained for \(3\) epochs with a learning rate of \(1 \times 10^{-5}\), and the micro-batch size per GPU is \(8\). The goal of this stage is to teach the model to reliably generate a structured three-part response format consisting of \texttt{<belief>}, \texttt{<think>}, and \texttt{<action>}, thereby providing a format-stable initialization for the subsequent RL stage and reducing invalid exploration caused by formatting errors at the beginning of training.

\textbf{RL optimization setup.} We implement ReBel on top of the veRL-agent framework and introduce the necessary modifications to accommodate partially observable long-horizon tasks. In the RL stage, the learning rate is set to \(1 \times 10^{-6}\), and AdamW is used as the optimizer. For each task input, we sample \(N = 16\) trajectories to form a rollout group, which is used for group-wise normalized advantage estimation. ReBel is trained for \(100\) iterations, whereas the other RL baselines, including PPO, RLOO, GRPO, and GiGPO, are trained for \(150\) iterations and averaged over \(3\) random seeds. For the ablation settings B0--B3, each configuration is trained for \(100\) iterations. The PPO mini-batch size is set to \(256\), and the micro-batch size per GPU is \(4\). The maximum episode length is \(30\) steps for ALFWorld and \(15\) steps for WebShop.

\textbf{Advantage formulation.} ReBel uses a linear combination of two-level advantages, namely the trajectory-level episode advantage \(A^{\mathrm{ep}}(\tau_i)\) and the belief-anchor step-level advantage \(A_S(i,t)\). The former is obtained by normalizing the task success signal within the rollout group, while the latter is computed by normalizing the step-level discounted return within the same belief equivalence class. We set the weight of the step-level component to \(0.5\), yielding \(\hat{A}^{\mathrm{tot}}_{i,t} = A^{\mathrm{ep}}(\tau_i) + 0.5 \cdot A_S(i,t)\).

\textbf{Format regularization.} To encourage the model to produce well-formed three-part responses, we apply a reward penalty of \(-0.1\) to outputs that violate the required format. An output is considered valid only if it contains at least one belief-state tag, namely \texttt{<belief>...\textless/belief>}, and one action tag, namely \texttt{<action>...\textless/action>}.

\textbf{Regularization and discounting.} We set the KL divergence loss coefficient to \(0.01\) using the low-variance estimator, and the KL penalty coefficient to \(0.001\). The RL discount factor is set to \(\gamma_{\mathrm{RL}} = 0.95\).

\subsection{Evaluation Protocol}
We evaluate model performance on the validation set and use task success rate as the primary metric. For ALFWorld, we report both per-subtask success rates and the overall success rate on the fixed set of \(128\) held-out tasks. For WebShop, we report average score, which reflects partial matching quality, as well as success rate \(\mathrm{SR}\). We follow the standard evaluation protocols of each benchmark. During evaluation, sampling is disabled by setting the temperature to \(0\), or equivalently using greedy decoding, in order to more stably reflect the model's generalization performance and to distinguish evaluation from the stochastic rollout policy used during training.

\section{Prompt Templates}
\label{sec:app_prompts}

Figure~\ref{fig:prompt_alfworld} and Figure~\ref{fig:prompt_webshop} present the prompt templates used by \textsc{ReBel} in the ALFWorld and WebShop environments, respectively. The templates are implemented with Python-style string formatting, where placeholders enclosed in curly braces are filled at runtime via \texttt{.format()}. Each prompt includes the task instruction, a short interaction history, the previous belief state, and the current observation. To keep the prompt concise, we retain only the two most recent actions in the history.

All prompts follow the same three-stage decision protocol: (i) belief update, which integrates the previous belief with the current observation; (ii) belief-grounded reasoning, which identifies the current task phase and the most relevant next step; and (iii) action selection, which outputs one executable action from the admissible action set. The model response is structured into three tagged blocks: \texttt{<belief>}, \texttt{<think>}, and \texttt{<action>}. The \texttt{<belief>} block serves as the structured latent state used by \textsc{ReBel}'s grouping-based credit assignment, the \texttt{<think>} block encourages brief intermediate reasoning, and the \texttt{<action>} block contains the final executable action string.

\begin{tcolorbox}[title = {\textbf{ALFWorld Prompt Template}},breakable,
colback=SeaGreen!8!CornflowerBlue!8,
colframe=RoyalPurple!70!Aquamarine,
boxrule=0.6pt,
arc=2mm,
left=1mm,right=1mm,top=1mm,bottom=1mm,
fonttitle=\bfseries]

You are an expert agent operating in the ALFRED embodied environment.

Task: \{task\_description\}

Step count: \{step\_count\} 
Recent history (\{history\_length\} steps): \{action\_history\}

Current step: \{current\_step\} 

Current observation: \{current\_observation\} 

\medskip

Use the previous belief state together with the current observation to
update your belief, reason about the current task phase, and select the
best next action. Historical observations are omitted because their
relevant information is already summarized in the previous belief.

\medskip

Previous belief:
\begin{center}
\textcolor{tagblue}{\textbf{<belief\_prev>}} \quad
\textcolor{slotbrown}{\{previous\_belief\}} \quad
\textcolor{tagblue}{\textbf{</belief\_prev>}}
\end{center}


Respond in the following format:

\tcblower
{\small

\begin{quote}

\textbf{<belief>}

\hspace*{1em} state:
\begin{itemize}
    \item "target": {"[attr]": "[value]"}
    \item "confirmed": {"[attr]": "[verified value]"}
    \item "unconfirmed": ["[attr]"]
    \item "product\_id": "[ASIN or null]"
\end{itemize}

\hspace*{1em} task:
\begin{itemize}
    \item "phase": "[searching | browsing | viewing | selecting | buying | done]"
    \item "target": "[product\_id or null]"
    \item "subgoal": "[updated immediate subgoal]"
\end{itemize}

\hspace*{1em} "prediction":
"If I execute [planned\_action], I expect [next-observation consequence]."

\textbf{</belief>}

\vspace{1mm}

\textbf{<think>}

\hspace*{1em} [Brief reasoning in 2--3 sentences: what is confirmed, what remains
uncertain, which phase the task is in, and why the chosen action is
currently optimal.]

\textbf{</think>}

\vspace{1mm}

\textbf{<action>}

\hspace*{1em} [Exact action string selected from the admissible action set]

\textbf{</action>}

\end{quote}
}

\end{tcolorbox}

\captionof{figure}{ALFWorld prompt template.}
\label{fig:prompt_alfworld}

\begin{tcolorbox}[title = {\textbf{WebShop Prompt Template}},breakable,
colback=SeaGreen!8!CornflowerBlue!8,
colframe=RoyalPurple!70!Aquamarine,
boxrule=0.6pt,
arc=2mm,
left=1mm,right=1mm,top=1mm,bottom=1mm,
fonttitle=\bfseries]

You are an expert autonomous agent operating in the WebShop e-commerce environment.

Task: \{task\_description\}

Step count: \{step\_count\} 

Recent history (\{history\_length\} steps): \{action\_history\}

Current step: \{current\_step\} 

Current observation: \{current\_observation\}

\medskip

Use the previous belief state together with the current observation to
update the belief, infer the current task phase, identify the main
information bottleneck, and select the best next action.

\medskip

Previous belief:
\begin{center}
\textcolor{tagblue}{\textbf{<belief\_prev>}} \quad
\textcolor{slotbrown}{\{previous\_belief\}} \quad
\textcolor{tagblue}{\textbf{</belief\_prev>}}
\end{center}

\medskip

Respond in the following format:

\tcblower

{\small

\begin{quote}

\textbf{<belief>}

\hspace*{1em} "state": 
    \begin{itemize}
        \item "target": \{"[attr]": "[value]"\}
        \item "confirmed": \{"[attr]": "[verified value]"\}
        \item "unconfirmed": ["[attr]"]
        \item "product\_id": "[ASIN or null]"
    \end{itemize}

\hspace*{1em} "task": 
    \begin{itemize}
        \item "phase": "[searching | browsing | viewing | selecting | buying | done]"
        \item "target": "[product\_id or null]"
        \item "subgoal": "[updated immediate subgoal]"
    \end{itemize}

    \item "prediction": \\
    "If I execute [planned\_action], I expect [next-observation consequence]."

\textbf{</belief>}

\vspace{1mm}

\textbf{<think>}

\hspace*{1em} [Brief reasoning in 2--3 sentences: what is confirmed, what remains
uncertain, which phase the task is in, and why the chosen action is
currently optimal.]

\textbf{</think>}

\vspace{1mm}

\textbf{<action>}

\hspace*{1em} [Exact action string selected from the admissible action set]

\textbf{</action>}

\end{quote}
}

\end{tcolorbox}

\captionof{figure}{WebShop prompt template.}
\label{fig:prompt_webshop}

\section{Limitations}
\label{sec:app_limitations}

Our study has two main limitations. First, our experiments are conducted on two representative benchmarks and one backbone scale, 1.5B. This setting allows us to evaluate the core hypothesis of \textsc{ReBel} in a controlled and comparable manner, especially in environments where partial observability and intermediate reasoning play an important role. However, it does not fully cover the diversity of possible task distributions, interaction patterns, and model capacity regimes. For example, tasks with longer horizons, more diverse feedback signals, or substantially larger policy models may exhibit different optimization dynamics. Evaluating \textsc{ReBel} under a wider range of environments and model scales would therefore provide a more complete picture of its empirical generality.

Second, the belief representation used in ALFWorld is designed to align with symbolic object-state tracking, which is a natural fit for the structure of that benchmark. This design choice makes it possible to define and evaluate belief consistency in a transparent and interpretable way. Nevertheless, other domains may expose different forms of observations and hidden states. For instance, vision-based environments may require beliefs over visual entities or spatial layouts, while continuous-control tasks may call for more compact or learned state abstractions. Extending \textsc{ReBel} to such settings would likely require adapting the belief format to the corresponding observation and state spaces. We view this as a promising direction for future work, rather than a fundamental restriction of the framework.

\section{Pseudo Code}

\begin{algorithm}[h]
\caption{ReBel Training}
\label{alg:rebel}
\begin{algorithmic}[1]
\REQUIRE Policy \(\pi_{\theta}\), reference policy \(\pi_{\mathrm{ref}}\), task distribution \(p(x)\), rollout size \(N\), horizon \(T\), weights \(\omega, \beta\), clip range \(\epsilon\)
\STATE Initialize \(\theta \leftarrow \theta_{\mathrm{SFT}}\)
\FOR{each training iteration}
    \STATE Sample task \(x \sim p(x)\) and collect \(N\) rollouts
    \FOR{each rollout \(i \in \{1,\dots,N\}\) and step \(t \in \{1,\dots,T\}\)}
        \STATE Generate \(b_t^{(i)}, z_t^{(i)}, a_t^{(i)}\) from \(\pi_{\theta}(\cdot \mid h_t^{(i)}, x)\)
        \STATE Execute \(a_t^{(i)}\) and observe \(o_{t+1}^{(i)}\)
        \STATE Compute belief-consistency reward \(r_{t,\mathrm{cons}}^{(i)}\) using observability mask and pending buffer
        \STATE Update history \(h_{t+1}^{(i)}\)
    \ENDFOR
    \STATE Compute episode-level advantage \(A_E^{(i)}\)
    \STATE Group step samples by canonicalized belief anchors and compute step-level advantage \(A_S^{(i,t)}\)
    \STATE Combine advantages: \(A_{\mathrm{tot}}^{(i,t)} = A_E^{(i)} + \omega A_S^{(i,t)}\)
    \STATE Update \(\theta\) by maximizing clipped PPO objective with KL regularization:
    \STATE \hspace{1.2em}
    \[
    J(\theta)=\mathbb{E}\!\left[
    \min\!\left(
    r_t(\theta)A_{\mathrm{tot}}^{(i,t)},
    \mathrm{clip}(r_t(\theta),1-\epsilon,1+\epsilon)A_{\mathrm{tot}}^{(i,t)}
    \right)
    \right]
    - \beta\,\mathrm{KL}\!\left(\pi_{\theta}\,\|\,\pi_{\mathrm{ref}}\right)
    \]
\ENDFOR
\end{algorithmic}
\end{algorithm}

\section{Additional Analysis}
\label{sec:app_additional}

\subsection{Belief Drift under Partial Observability}

In partially observable environments, agents must reason over latent beliefs inferred from incomplete observations. As illustrated in Figure~\ref{fig:intro}, a think-only agent may become overconfident in an incorrect belief and continue selecting actions that are locally consistent with this mistaken assumption, even after receiving contradictory observations. Since sparse terminal rewards provide limited supervision over intermediate reasoning errors, standard outcome-level reinforcement learning often fails to correct such latent belief drift. In contrast, belief-aware policies explicitly revise their internal beliefs according to new observations, progressively reducing uncertainty and enabling recovery from incorrect hypotheses before the trajectory diverges from the task objective.


\begin{figure}[h]
  \centering
  \includegraphics[width=\textwidth]{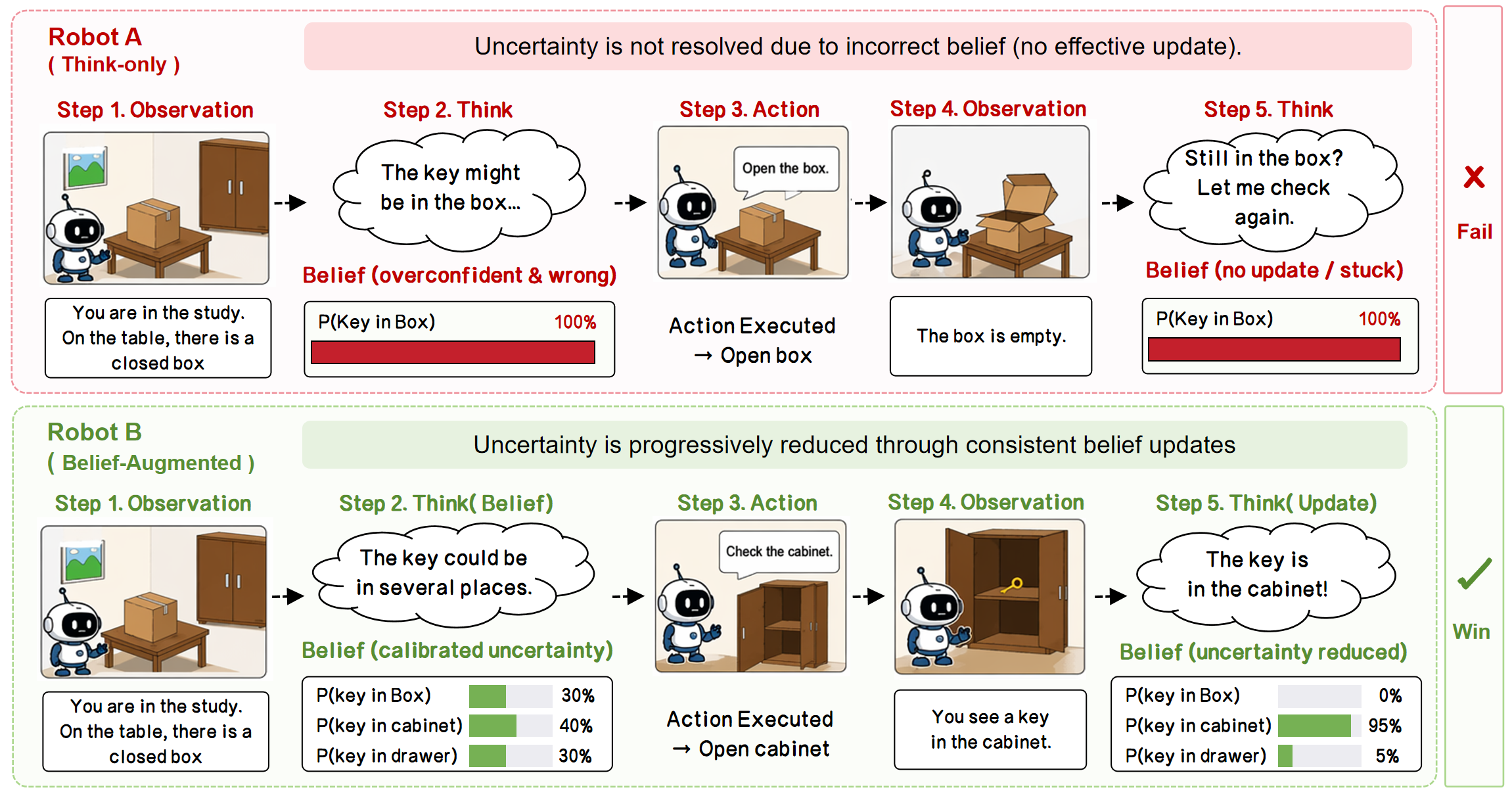}
  \caption{\textbf{Belief drift as a failure mode in partially observable environments.}
    The top row shows a Think-only agent that remains overconfident in an incorrect belief and repeatedly executes invalid actions. The bottom row shows a Belief-Augmented agent that updates its belief from observations, progressively reduces uncertainty, and succeeds in the task. \textsc{ReBel} aims to induce this belief-aware reasoning behavior during RL training.}
  \label{fig:intro}
\end{figure}




\subsection{Independent Contribution of SFT Cold Start}
\label{subsec:app_decomp}

To isolate the contribution of the SFT warm start, we evaluate the agent immediately after SFT initialization, before any RL optimization, and compare it with the full \textsc{ReBel} pipeline after SFT + RL training.

The results indicate that SFT alone provides only limited task performance. On ALFWorld~\cite{ALFWorld20}, the SFT-only model achieves a mean success rate of 3.6\% across three random seeds, with scores ranging from 2.3\% to 5.5\%, which is far below the 93.2\% achieved by full \textsc{ReBel}. This large gap shows that SFT is not the main source of the final performance gain. Instead, its primary role is to stabilize the structured \texttt{<belief>} output format and provide a reliable initialization for subsequent process-level RL optimization.

In other words, SFT mainly serves as a warm start that improves training stability and makes the belief representation usable for downstream reinforcement learning. The substantial performance improvements in \textsc{ReBel} come primarily from RL optimization over belief consistency and semantically coherent grouping, rather than from supervised pretraining itself.

\begin{table}[t]
\centering
\caption{Independent contribution of the SFT warm start on ALFWorld~\cite{ALFWorld20}.
Results are reported as mean $\pm$ std over 3 random seeds.}
\label{tab:sft_ablation}
\resizebox{\columnwidth}{!}{%
\begin{tabular}{llc}
\toprule
Method & Initialization & ALFWorld Overall (\%) \\
\midrule
SFT only (3 epochs, no RL) & SFT checkpoint & 3.6 $\pm$ 1.6 \\
GRPO + SFT init (ablation baseline) & SFT checkpoint & 60.9 \\
\textbf{ReBel} (full, SFT + RL) & SFT checkpoint & \textbf{93.2 $\pm$ 4.1} \\
\bottomrule
\end{tabular}%
}
\end{table}

\subsection{Representative Trajectories}
\label{subsec:app_case}


\section{Theoretical Analysis}
\label{sec:app_theory}

\subsection{Problem Diagnosis and REBEL Design}
\label{subsec:app_problem_design}

We consider a partially observable sequential decision process with latent state \(s_t \in \mathcal{S}\), observation \(o_t \in \mathcal{O}\), action \(a_t \in \mathcal{A}\), and history \(h_t = (o_1, a_1, \ldots, o_t)\). The task input is denoted by \(x\). Since \(s_t\) is not directly observable, a policy must maintain an internal belief over the task-relevant aspects of the environment. In what follows, we use \(\bar b_t\) to denote the implicit belief induced by a conventional end-to-end policy, and \(b_t\) to denote the explicit belief used by REBEL.

A conventional end-to-end policy directly maps the history to the next output,
\[
\pi_\theta(y_t \mid h_t, x),
\]
where \(y_t\) may contain reasoning tokens, tool calls, or the final action. Although this policy implicitly encodes a belief \(\bar b_t\), such belief is neither explicitly exposed nor directly supervised. As a result, if \(\bar b_t\) deviates from the true task-relevant state abstraction, the agent receives no timely correction signal. Under sparse terminal rewards, this deviation may persist across many steps and propagate through subsequent reasoning and action choices.

We formalize this failure mode as belief drift.

\begin{definition}[Belief drift]
\label{def:belief_drift}
Let \(D:\mathcal{B}\times\mathcal{B}\rightarrow \mathbb{R}_{\ge 0}\) be a discrepancy function. The belief drift at step \(t\) is defined as
\[
d_t = D(b_t, \phi(s_t)),
\]
where \(b_t\) is the belief used by the agent for decision making and \(\phi:\mathcal{S}\rightarrow\mathcal{B}\) is a task-relevant projection from latent states to belief space.
\end{definition}

From an optimization perspective, the main difficulty is credit assignment. With only an episode-level return \(R_i\) for trajectory \(i\), all intermediate decisions share the same delayed learning signal:
\[
g_i^{\mathrm{trad}}
=
\sum_{t=1}^{T_i}
\nabla_\theta \log \pi_\theta(y_{i,t}\mid h_{i,t},x_i)\,A_i^E.
\]
Therefore, the policy gradient cannot distinguish whether a failure is caused by an incorrect belief update, an unhelpful reasoning step, or a poor final action. In long-horizon partially observable tasks, this ambiguity makes training unstable. Under the usual weak-correlation assumption across steps, the variance of such a trajectory-level estimator grows at least approximately linearly with horizon length, which explains why sparse terminal supervision is often insufficient.

REBEL addresses this issue by making the latent decision structure explicit. Instead of treating each step as a monolithic generation process, REBEL decomposes it into Belief, Think, and Action layers:
\[
\pi_\theta(b_t, z_t, a_t \mid h_t, x)
=
\pi_\theta(b_t \mid h_t, x)
\,
\pi_\theta(z_t \mid h_t, x, b_t)
\,
\pi_\theta(a_t \mid h_t, x, b_t, z_t).
\]
Here \(b_t\) is the explicit belief, \(z_t\) is the intermediate reasoning trace, and \(a_t\) is the action. This factorization turns the hidden belief into an observable and trainable object, and it separates three sources of error: belief construction, reasoning conditioned on belief, and action selection conditioned on both.

To supervise the belief layer, REBEL introduces belief consistency supervision. Suppose the belief is represented by \(K\) task predicates \(p_1,\ldots,p_K\). Let \(\hat p_{t,k}(b_t)\) be the value of predicate \(p_k\) implied by belief \(b_t\), and let \(v_{t+1,k}\) be the corresponding value verified from the next observation whenever it is observable. Because not every predicate is immediately observable, we define an observability mask \(m_{t,k}\in\{0,1\}\). The consistency indicator is
\[
C_{t,k}
=
\mathbf{1}\{\hat p_{t,k}(b_t)=v_{t+1,k}\}.
\]
The step-wise consistency reward is
\[
r_t^{\mathrm{cons}}
=
\frac{
\sum_{k=1}^{K} m_{t,k} C_{t,k}
}{
\max\{1,\sum_{k=1}^{K} m_{t,k}\}
}.
\]
This reward provides immediate supervision for the observable part of the belief. For predicates that are not yet observable, REBEL maintains a pending buffer \(U_t\) and evaluates them once a later observation makes them verifiable. In this way, consistency supervision supports both immediate correction and delayed verification.

Finally, REBEL introduces a belief-anchor step-wise advantage to avoid the singleton problem in partially observable environments. State-based grouping is ineffective because the same physical state \(s_t\) may rarely reappear across sampled trajectories. REBEL instead groups samples by a symbolic belief anchor \(\tilde b_t\), defined as the observable and pending predicate signature extracted from \(b_t\). The belief-anchor group is
\[
\mathcal{G}_B(\tilde b)
=
\{(i,t): \tilde b_{i,t}=\tilde b\}.
\]
Let \(G_{i,t}^{S}\) be a local step-level return, for example the discounted sum of consistency rewards over a short horizon:
\[
G_{i,t}^{S}
=
\sum_{\ell=t}^{t+H}\gamma^{\ell-t}r_{i,\ell}^{\mathrm{cons}}.
\]
The belief-anchor advantage is then
\[
A_{i,t}^{S}
=
\frac{
G_{i,t}^{S}-\mu_B(\tilde b_{i,t})
}{
\sigma_B(\tilde b_{i,t})+\varepsilon
},
\]
where
\[
\mu_B(\tilde b)
=
\frac{1}{|\mathcal{G}_B(\tilde b)|}
\sum_{(j,u)\in\mathcal{G}_B(\tilde b)} G_{j,u}^{S},
\]
and \(\sigma_B(\tilde b)\) is the corresponding within-anchor standard deviation. The total learning signal combines trajectory-level success with belief-anchor step-level feedback:
\[
A_{i,t}^{\mathrm{tot}}
=
A_i^{E}
+
\omega A_{i,t}^{S}.
\]
This design gives REBEL two complementary supervision channels: \(A_i^E\) preserves global task optimization, while \(A_{i,t}^{S}\) provides dense belief-level correction under a shared semantic context.

\subsection{Variance Reduction Analysis}
\label{subsec:app_variance_analysis}

We now explain why the above design reduces the variance of step-wise advantage estimation. The key idea is that belief consistency supplies a dense intermediate signal, while belief-anchor grouping provides a meaningful comparison set in partially observable settings.

We assume that the belief discrepancy decomposes over task predicates:
\[
D(b_t,\phi(s_t))
=
\sum_{k=1}^{K}w_k \,\ell_k\bigl(\hat p_{t,k}(b_t),p_k(\phi(s_t))\bigr),
\]
where \(w_k\ge 0\) and each \(\ell_k\) is a bounded predicate-level loss. Let
\[
\kappa_t = \frac{1}{K}\sum_{k=1}^{K}m_{t,k}
\]
be the observable predicate fraction at step \(t\).

\begin{proposition}[Consistency supervision contracts observable belief drift]
\label{prop:consistency_contracts_drift}
Assume that observable predicate verification is sound up to error \(\delta_{\mathrm{obs}}\), meaning that whenever \(m_{t,k}=1\) and \(C_{t,k}=1\), the corresponding predicate-level belief error is at most \(\delta_{\mathrm{obs}}\). Suppose further that every pending predicate is verified within at most \(\Delta\) steps with probability at least \(1-\rho\). Then there exist constants \(\alpha>0\), \(c_1>0\), and \(c_2>0\) such that
\[
\mathbb{E}[d_{t+1}]
\le
(1-\alpha\kappa_t)\mathbb{E}[d_t]
+
c_1\,\mathbb{E}[1-r_t^{\mathrm{cons}}]
+
c_2(\rho+\Delta_{\mathrm{err}}),
\]
where \(\Delta_{\mathrm{err}}\) denotes the maximum additional drift accumulated during delayed verification.
\end{proposition}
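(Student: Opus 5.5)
The plan is to decompose the drift $d_{t+1}=\sum_k w_k\ell_k(\hat p_{t+1,k}(b_{t+1}),p_k(\phi(s_{t+1})))$ predicate by predicate, according to the status of each predicate at step $t$. We use three index sets:
\begin{itemize}
\item $\mathcal{K}^{\mathrm{ok}}_t=\{k:m_{t,k}=1,\,C_{t,k}=1\}$, the observable and consistent predicates;
\item $\mathcal{K}^{\mathrm{bad}}_t=\{k:m_{t,k}=1,\,C_{t,k}=0\}$, the observable but inconsistent predicates;
\item $\mathcal{K}^{\mathrm{pend}}_t=\{k:m_{t,k}=0\}$, the unobservable predicates, which are held in the pending buffer $\mathcal{U}_t$.
\end{itemize}
Let $\bar w=\max_k w_k$ and $L=\max_k\sup\ell_k$, both finite by the boundedness assumption. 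We will bound the contribution of each set to $\mathbb{E}[d_{t+1}]$ separately and then recombine the three bounds. To do this we need an explicit modeling assumption, which the statement leaves implicit and which we will state as part of the proof: the belief update at step $t+1$ is "evidence-faithful", meaning that once a predicate has been verified the agent's belief at $t+1$ retains the verified value up to the transition's effect. We also need that the transition and projection $\phi$ change each predicate's true value only in a way already absorbed into $\Delta_{\mathrm{err}}$, or via a Lipschitz constant that we fold into $\alpha$.

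\emph{Step 1 (consistent observable predicates).} By the soundness hypothesis, each $k\in\mathcal{K}^{\mathrm{ok}}_t$ contributes at most $w_k\delta_{\mathrm{obs}}$. In a contraction inequality the natural way to express this is that these predicates lose a fixed fraction of their previous error. Concretely, we will show that the contribution of $\mathcal{K}^{\mathrm{ok}}_t\cup\mathcal{K}^{\mathrm{bad}}_t$ to $d_{t+1}$ is at most $(1-\alpha)$ times their contribution to $d_t$, plus additive terms. We choose $\alpha\in(0,1]$ so that $\delta_{\mathrm{obs}}\le(1-\alpha)\cdot(\text{that predicate's error at }t)$ whenever the latter exceeds $\delta_{\mathrm{obs}}$. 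Predicates whose error is already below $\delta_{\mathrm{obs}}$ contribute at most $\delta_{\mathrm{obs}}$ in absolute terms, and this term is absorbed into $c_2\Delta_{\mathrm{err}}$ by taking $\Delta_{\mathrm{err}}\ge\delta_{\mathrm{obs}}\sum_k w_k$ as part of the definition.

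\emph{Step 2 (inconsistent observable predicates).} For $k\in\mathcal{K}^{\mathrm{bad}}_t$ we bound the loss crudely by $w_kL$. The number of such predicates is $\sum_k m_{t,k}(1-C_{t,k})=(1-r_t^{\mathrm{cons}})\sum_k m_{t,k}\le K(1-r_t^{\mathrm{cons}})$. Their total contribution is therefore at most $c_1(1-r_t^{\mathrm{cons}})$ with $c_1=K\bar wL$.

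\emph{Step 3 (pending predicates).} For $k\in\mathcal{K}^{\mathrm{pend}}_t$ we split on the event that the predicate is verified within $\Delta$ steps. On the complement, which has probability at most $\rho$, we bound the contribution by $\bar wLK$. On the event itself, the error added relative to $d_t$ is at most $\Delta_{\mathrm{err}}$ by definition. Together these give $c_2(\rho+\Delta_{\mathrm{err}})$ with $c_2=\max(\bar wLK,1)$. Pending predicates receive no contraction factor; their coefficient is simply $1$.

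\emph{Step 4 (recombination).} Taking expectations and summing the three cases gives
\[
\mathbb{E}[d_{t+1}]\le\mathbb{E}[d_t]-\alpha\,\mathbb{E}[d_t^{\mathrm{obs}}]+c_1\mathbb{E}[1-r_t^{\mathrm{cons}}]+c_2(\rho+\Delta_{\mathrm{err}}),
\]
where $d_t^{\mathrm{obs}}$ is the part of $d_t$ carried by the observable predicates. We then need the lower bound $\mathbb{E}[d_t^{\mathrm{obs}}]\ge\kappa_t\mathbb{E}[d_t]$, up to a constant absorbed into $\alpha$.

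This last inequality is the main obstacle, because it fails for adversarial masks. My first attempt would be to add a mild exchangeability assumption: conditioned on $\kappa_t$, the observable set is uniformly random among predicates of comparable weight, or more generally the mask is independent of the per-predicate errors. Under that assumption $\mathbb{E}[d_t^{\mathrm{obs}}\mid\kappa_t]\ge(\min_k w_k/\bar w)\,\kappa_t\,\mathbb{E}[d_t]$. If that assumption is unavailable, the fallback is to move the shortfall $\kappa_t d_t-d_t^{\mathrm{obs}}\le\bar wLK$ into the constant term. The proposition still holds in that case, but with a looser $c_2$, and the bound then requires $\rho+\Delta_{\mathrm{err}}$ to be bounded below.
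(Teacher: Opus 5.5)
Your proposal follows the same route as the paper's proof. You split the predicates into observable ones, whose error is controlled by the soundness hypothesis together with the inconsistency count $(1-r_t^{\mathrm{cons}})\sum_k m_{t,k}$, and pending ones, handled through the verified-within-$\Delta$ event (failure probability $\rho$) with at most $\Delta_{\mathrm{err}}$ accumulated error, and then recombine. The paper's two-paragraph argument only asserts that ``combining these two effects yields the stated bound,'' so the pieces you make explicit are exactly the steps it leaves tacit: an evidence-faithful update linking $b_{t+1}$ to the verified values, a mask--error independence condition that produces the $\kappa_t$ factor, and absorbing the leftover $\delta_{\mathrm{obs}}\sum_k w_k$ into the additive constant.
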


\begin{proof}
For observable predicates, the consistency reward directly penalizes disagreement between the belief-implied predicate value and the value verified from the subsequent observation. Hence, a higher \(r_t^{\mathrm{cons}}\) implies a smaller observable component of \(D(b_t,\phi(s_t))\), up to verification error \(\delta_{\mathrm{obs}}\).

For unobservable predicates, REBEL does not force an immediate penalty. Instead, the pending buffer \(U_t\) stores such predicates until they become verifiable. By assumption, verification occurs within \(\Delta\) steps with probability at least \(1-\rho\). During this delay, the uncorrected component can accumulate at most \(\Delta_{\mathrm{err}}\) additional error. Combining these two effects yields the stated bound.
\end{proof}

Proposition~\ref{prop:consistency_contracts_drift} shows that consistency supervision acts as an intermediate corrective signal. If the average observability is bounded below by \(\bar\kappa>0\) and the average inconsistency is bounded by \(\epsilon_{\mathrm{cons}}\), then repeated application gives
\[
\limsup_{t\rightarrow\infty}\mathbb{E}[d_t]
\le
\frac{
c_1\epsilon_{\mathrm{cons}}+c_2(\rho+\Delta_{\mathrm{err}})
}{
\alpha\bar\kappa
}.
\]
Thus REBEL prevents belief drift from growing unchecked and confines the expected drift to a bounded region determined by observability, verification quality, and delay.

We next analyze the variance of step-wise advantage estimation. Let \(N\) be the number of sampled step instances in a training batch. For any grouping variable \(H\), define the group containing sample \(j=(i,t)\) as
\[
\mathcal{G}_H(H_j)
=
\{j': H_{j'}=H_j\},
\]
with group size \(n_H(j)=|\mathcal{G}_H(H_j)|\). A state-based method uses \(H_j=s_{i,t}\), while REBEL uses \(H_j=\tilde b_{i,t}\).

The failure of state-based grouping in POMDPs follows from support size. Let \(M_S\) be the effective number of reachable latent states and \(M_B\) be the number of possible belief anchors. If the anchor contains \(K\) ternary predicate values, namely true, false, and unknown, then
\[
M_B\le 3^K.
\]
In long-horizon partially observable tasks, \(M_S\) is typically much larger than \(M_B\). Under a roughly uniform occupancy approximation, the expected group sizes satisfy
\[
\mathbb{E}[n_S]\approx 1+\frac{N-1}{M_S},
\qquad
\mathbb{E}[n_B]\ge 1+\frac{N-1}{M_B}.
\]
Therefore, when \(M_B\ll M_S\), belief-anchor groups are much larger than state-based groups, and state-based groups are often close to singletons.

This difference directly affects advantage variance. Let \(G_j\) denote the step-level target return for sample \(j\). For a grouping variable \(H\), define the centered advantage
\[
\tilde A_j^{H}
=
G_j-\mathbb{E}[G_j\mid H_j].
\]
By the law of total variance,
\[
\mathrm{Var}(\tilde A_j^{H})
=
\mathbb{E}\bigl[\mathrm{Var}(G_j\mid H_j)\bigr].
\]
Hence any grouping variable that captures meaningful semantic similarity reduces the residual variance of the advantage by removing the predictable group-level component.

In practice, the conditional mean is estimated from samples. Using a leave-one-out group mean \(\hat\mu_{H,-j}\), define
\[
\hat A_j^{H}
=
G_j-\hat\mu_{H,-j}.
\]
If the within-group variance is \(\sigma_H^2(H_j)=\mathrm{Var}(G_j\mid H_j)\) and \(n_H(j)>1\), then
\[
\mathrm{Var}(\hat A_j^{H}\mid H_j,n_H(j))
=
\sigma_H^2(H_j)
\left(
1+\frac{1}{n_H(j)-1}
\right).
\]
This expression shows why state-based grouping is unstable in POMDPs: when \(n_S(j)\approx 1\), the group baseline is either unavailable or highly noisy. In contrast, belief-anchor grouping has larger \(n_B(j)\), so the finite-sample estimation term is much smaller.

Belief anchors also reduce the intrinsic within-group variance. Samples sharing the same anchor \(\tilde b\) have the same predicate-level cognitive context, even if their underlying physical states differ. Suppose the local return is Lipschitz with respect to belief drift up to bounded noise:
\[
G_j
=
F(\tilde b_j,a_j)
+
\xi_j,
\qquad
\mathrm{Var}(\xi_j\mid \tilde b_j)\le \sigma_\xi^2,
\]
and
\[
|F(\tilde b_j,a_j)-F(\tilde b_j,a_j')|
\le
L_R d_j.
\]
Then
\[
\mathrm{Var}(G_j\mid \tilde b_j)
\le
\sigma_\xi^2
+
L_R^2\mathbb{E}[d_j^2\mid \tilde b_j].
\]
By Proposition~\ref{prop:consistency_contracts_drift}, consistency supervision bounds the drift term and therefore tightens the within-anchor return distribution.

Combining these observations gives the main conclusion: belief-anchor grouping yields a lower-variance step-wise advantage estimator than state-based grouping in partially observable tasks, because it avoids the singleton-group problem, increases the effective comparison set size, and aligns samples by semantic belief rather than exact latent state. The total learning signal \(A_{i,t}^{\mathrm{tot}}=A_i^E+\omega A_{i,t}^{S}\) further preserves trajectory-level optimization while injecting low-variance step-level correction.

\section{Trajectory Case Study}

\begin{center}
\begin{tabular}{ll}
\toprule
\textbf{ID} & \texttt{b3\_rebel\_00082} \\
\textbf{Task Type} & pick\_clean\_then\_place\_in\_recep \\
\textbf{Status} & \textcolor{green!60!black}{\textbf{Success}} \\
\textbf{Steps} & 8 \\
\textbf{Format Valid Rate:} & 1.00 \\
\textbf{Episode ID:} & 82 \\
\bottomrule
\end{tabular}
\end{center}

\subsection*{Task}
\begin{tcolorbox}[colback=yellow!10, colframe=yellow!60!black, fonttitle=\bfseries\small, title={\faFlag~Task Description}]
put a clean mug in coffeemachine.
\end{tcolorbox}

\subsection*{Interaction}

\begin{tcolorbox}[title={\faEye~Observation (Step 1)}, envstyle]
\small\ttfamily -= Welcome to TextWorld, ALFRED! =-

You are in the middle of a room. Looking quickly around you, you see a cabinet 6, a cabinet 5, a cabinet 4, a cabinet 3, a cabinet 2, a cabinet 1, a coffeemachine 1, a countertop 3, a countertop 2, a countertop 1, a drawer 3, a drawer 2, a drawer 1, a fridge 1, a garbagecan 1, a microwave 1, a shelf 3, a shelf 2, a shelf 1, a sinkbasin 1, a stoveburner 4, a stoveburner 3, a stoveburner 2, a stoveburner 1, and a toaster 1.

Your task is to: put a clean mug in coffeemachine.
\end{tcolorbox}

\begin{tcolorbox}[title={\faRobot~Agent (Step 1) \textcolor{green!60!black}{(admissible)}}, agentstyle]
\textbf{\faDatabase~Belief State:}\\
\begin{adjustbox}{max width=\linewidth}
\parbox{\linewidth}{\small \textbf{Phase:} find \\
    \textbf{Target:} mug \\
    \textbf{Subgoal:} Find the mug so it can later be cleaned and placed in the coffeemachine. \\
    \textbf{Prediction:} If I execute go to countertop 1, I expect On the countertop 1, you see a apple 2, a apple 1, a bread 1, a bread 2, a egg 1, a fork 1, a mug 1, a pan 1, and a tomato 1.}
\end{adjustbox}
\medskip
\textbf{\faBolt~Action:}\\
\texttt{go to countertop 1}
\end{tcolorbox}

\begin{tcolorbox}[title={\faEye~Observation (Step 2)}, envstyle]
\small\ttfamily You arrive at countertop 1. On the countertop 1, you see a apple 3, a cellphone 1, a creditcard 1, a peppershaker 3, a peppershaker 1, and a tomato 1.
\end{tcolorbox}

\begin{tcolorbox}[title={\faRobot~Agent (Step 2) \textcolor{green!60!black}{(admissible)}}, agentstyle]
\textbf{\faDatabase~Belief State:}\\
\begin{adjustbox}{max width=\linewidth}
\parbox{\linewidth}{\small \textbf{Phase:} find \\
    \textbf{Target:} mug \\
    \textbf{Subgoal:} Find the mug so it can later be cleaned and placed in the coffeemachine. \\
    \textbf{Visited:} countertop 1 \\
    \textbf{Prediction:} If I execute go to countertop 2, I expect On the countertop 2, you see a apple 2, a bread 1, a bread 2, a egg 2, a fork 2, a mug 2, a pan 2, and a tomato 2.}
\end{adjustbox}
\medskip
\textbf{\faBolt~Action:}\\
\texttt{go to countertop 2}
\end{tcolorbox}

\begin{tcolorbox}[title={\faEye~Observation (Step 3)}, envstyle]
\small\ttfamily You arrive at countertop 2. On the countertop 2, you see a apple 2, a bowl 1, a butterknife 1, a fork 1, a mug 1, a spoon 1, and a statue 1.
\end{tcolorbox}

\begin{tcolorbox}[title={\faRobot~Agent (Step 3) \textcolor{green!60!black}{(admissible)}}, agentstyle]
\textbf{\faDatabase~Belief State:}\\
\begin{adjustbox}{max width=\linewidth}
\parbox{\linewidth}{\small \textbf{Phase:} pickup \\
    \textbf{Target:} mug 1 \\
    \textbf{Subgoal:} Pick up mug 1 from countertop 2 so it can be cleaned next. \\
    \textbf{Objects:} mug 1: countertop 2 \\
    \textbf{Visited:} countertop 1, countertop 2 \\
    \textbf{Prediction:} If I execute take mug 1 from countertop 2, I expect You pick up the mug 1 from the countertop 2.}
\end{adjustbox}
\medskip
\textbf{\faBolt~Action:}\\
\texttt{take mug 1 from countertop 2}
\end{tcolorbox}

\begin{tcolorbox}[title={\faEye~Observation (Step 4)}, envstyle]
\small\ttfamily You pick up the mug 1 from the countertop 2.
\end{tcolorbox}

\begin{tcolorbox}[title={\faRobot~Agent (Step 4) \textcolor{green!60!black}{(admissible)}}, agentstyle]
\textbf{\faDatabase~Belief State:}\\
\begin{adjustbox}{max width=\linewidth}
\parbox{\linewidth}{\small \textbf{Phase:} transform \\
    \textbf{Target:} mug 1 \\
    \textbf{Subgoal:} Bring the mug 1 to a sinkbasin so it can be cleaned. \\
    \textbf{Objects:} mug 1: in\_hand \\
    \textbf{Visited:} countertop 1, countertop 2 \\
    \textbf{Prediction:} If I execute go to sinkbasin 1, I expect On the sinkbasin 1, you see a apple 2, a apple 1, a bread 1, a fork 2, a fork 1, a knife 1, a mug 2, a potato 1, and a tomato 2.}
\end{adjustbox}
\medskip
\textbf{\faBolt~Action:}\\
\texttt{go to sinkbasin 1}
\end{tcolorbox}

\begin{tcolorbox}[title={\faEye~Observation (Step 5)}, envstyle]
\small\ttfamily You arrive at sinkbasin 1. On the sinkbasin 1, you see a glassbottle 2, and a spatula 1.
\end{tcolorbox}

\begin{tcolorbox}[title={\faRobot~Agent (Step 5) \textcolor{green!60!black}{(admissible)}}, agentstyle]
\textbf{\faDatabase~Belief State:}\\
\begin{adjustbox}{max width=\linewidth}
\parbox{\linewidth}{\small \textbf{Phase:} transform \\
    \textbf{Target:} mug 1 \\
    \textbf{Subgoal:} Clean the mug 1 using sinkbasin 1 so it can later be placed in the coffeemachine. \\
    \textbf{Objects:} mug 1: in\_hand \\
    \textbf{Visited:} countertop 1, countertop 2, sinkbasin 1 \\
    \textbf{Prediction:} If I execute clean mug 1 with sinkbasin 1, I expect On the sinkbasin 1, you see a apple 2, a apple 1, a bread 1, a fork 2, a fork 1, a knife 1, a mug 2, a potato 1, and a tomato 2.}
\end{adjustbox}
\medskip
\textbf{\faBolt~Action:}\\
\texttt{clean mug 1 with sinkbasin 1}
\end{tcolorbox}

\begin{tcolorbox}[title={\faEye~Observation (Step 6)}, envstyle]
\small\ttfamily You clean the mug 1 using the sinkbasin 1.
\end{tcolorbox}

\begin{tcolorbox}[title={\faRobot~Agent (Step 6) \textcolor{green!60!black}{(admissible)}}, agentstyle]
\textbf{\faDatabase~Belief State:}\\
\begin{adjustbox}{max width=\linewidth}
\parbox{\linewidth}{\small \textbf{Phase:} place \\
    \textbf{Target:} mug 1 \\
    \textbf{Subgoal:} Carry the cleaned mug 1 to coffeemachine 1 and place it inside. \\
    \textbf{Objects:} mug 1: in\_hand \\
    \textbf{Visited:} countertop 1, countertop 2, sinkbasin 1 \\
    \textbf{Prediction:} If I execute go to coffeemachine 1, I expect On the coffeemachine 1, you see a apple 1, a bread 2, a bread 1, a butterknife 2, a butterknife 1, a egg 1, a fork 3, a fork 2, a fork 1, a mug 2, a mug 1,...[truncated]}
\end{adjustbox}
\medskip
\textbf{\faBolt~Action:}\\
\texttt{go to coffeemachine 1}
\end{tcolorbox}

\begin{tcolorbox}[title={\faEye~Observation (Step 7)}, envstyle]
\small\ttfamily You arrive at coffeemachine 1. On the coffeemachine 1, you see nothing.
\end{tcolorbox}

\begin{tcolorbox}[title={\faRobot~Agent (Step 7) \textcolor{red}{(inadmissible)}}, agentstyle]
\textbf{\faDatabase~Belief State:}\\
\begin{adjustbox}{max width=\linewidth}
\parbox{\linewidth}{\small \textbf{Phase:} place \\
    \textbf{Target:} mug 1 \\
    \textbf{Subgoal:} Place the cleaned mug 1 into coffeemachine 1. \\
    \textbf{Objects:} mug 1: in\_hand \\
    \textbf{Visited:} countertop 1, countertop 2, sinkbasin 1, coffeemachine 1 \\
    \textbf{Prediction:} If I execute put mug 1 in/on coffeemachine 1, I expect On the coffeemachine 1, you see a apple 1, a bread 2, a bread 1, a butterknife 2, a butterknife 1, a egg 1, a fork 3, a fork 2, a fork 1, a mug 2...[truncated]}
\end{adjustbox}
\medskip
\textbf{\faBolt~Action:}\\
\texttt{put mug 1 in/on coffeemachine 1}
\end{tcolorbox}

\begin{tcolorbox}[title={\faEye~Observation (Step 8)}, envstyle]
\small\ttfamily On the coffeemachine 1, you see nothing.
\end{tcolorbox}

\begin{tcolorbox}[title={\faRobot~Agent (Step 8) \textcolor{green!60!black}{(admissible)}}, agentstyle]
\textbf{\faDatabase~Belief State:}\\
\begin{adjustbox}{max width=\linewidth}
\parbox{\linewidth}{\small \textbf{Phase:} place \\
    \textbf{Target:} mug 1 \\
    \textbf{Subgoal:} Place the cleaned mug 1 into coffeemachine 1. \\
    \textbf{Objects:} mug 1: in\_hand \\
    \textbf{Visited:} countertop 1, countertop 2, sinkbasin 1, coffeemachine 1 \\
    \textbf{Prediction:} If I execute move mug 1 to coffeemachine 1, I expect On the coffeemachine 1, you see a apple 1, a bread 2, a bread 1, a butterknife 2, a butterknife 1, a egg 1, a fork 3, a fork 2, a fork 1, a mug 2, ...[truncated]}
\end{adjustbox}
\medskip
\textbf{\faBolt~Action:}\\
\texttt{move mug 1 to coffeemachine 1}
\end{tcolorbox}

\end{document}